\newcommand{\energy}{f}
\newcommand{\y}{\mathbf{y}}
\newcommand{\Y}{\mathbf{Y}}
\newcommand{\vv}{\mathbf{v}}
\newcommand{\B}{\mathbf{B}}
\newcommand{\LL}{\mathbf{L}}
\newcommand{\g}{\mathbf{g}}
\newcommand{\bnu}{\boldsymbol{\nu}}
\newcommand{\decoder}{\nu}
\newcommand{\Decoder}{\boldsymbol{\nu}}
\newcommand{\bSigma}{\boldsymbol{\Sigma}}
\newcommand{\arbitrary}{m}
\newcommand{\parameters}{\theta}
\newcommand{\M}{M}
\newcommand{\m}{m}
\DeclareMathOperator*{\expectation}{\mathbb{E}}
\def\<{\begin{equation}}
\def\>{\end{equation}}
\newcommand{\eff}{\text{eff}}
\newcommand{\mean}{\text{mean}}
\DeclarePairedDelimiterX{\infdivx}[2]{(}{)}{%
  #1\;\delimsize\|\;#2%
}
\newcommand{\cmark}{\ding{51}}%
\newcommand{\xmark}{\ding{55}}%
\newcommand{\upstairs}[1]{\textsuperscript{#1}}
\newcommand{\affilone}{1}
\newcommand{\affiltwo}{2}
\newtheorem{remark}{Remark}
\newtheorem{lemma}{Lemma}
\newcommand{\alcolor}{\color{blue}}
\newcommand\encircle[1]{%
  \tikz[baseline=(X.base)] 
    \node (X) [draw, shape=circle, inner sep=0] {\strut #1};}
\title{Multimeasurement Generative Models}
\author{Saeed Saremi\upstairs{\affilone, \affiltwo}\& Rupesh Kumar Srivastava\upstairs{\affilone}\quad  \\
    \upstairs{\affilone}NNAISENSE Inc.\\
    \upstairs{\affiltwo}Redwood Center, UC Berkeley \\
    \texttt{\{saeed,rupesh\}@nnaisense.com}
}
\begin{document}

\maketitle

\begin{abstract}
We formally map the problem of sampling from an unknown distribution with a density in $\mathbb{R}^d$ to the problem of learning and sampling a smoother density in $\mathbb{R}^{Md}$ obtained by convolution with a fixed factorial kernel: the new density is referred to as \emph{M-density} and the kernel as \emph{multimeasurement noise model} (MNM). The M-density in $\mathbb{R}^{Md}$ is smoother than the original density in $\mathbb{R}^d$, easier to learn and sample from, yet for large $M$ the two problems are mathematically equivalent since clean data can be estimated exactly given a multimeasurement noisy observation using the Bayes estimator.  To formulate the problem, we derive the Bayes estimator for Poisson and Gaussian MNMs in closed form in terms of the unnormalized M-density. This leads to a simple least-squares objective for learning parametric energy and score functions. We present various parametrization schemes of interest including one in which studying Gaussian M-densities directly leads to \emph{multidenoising autoencoders}\textemdash this is the first theoretical connection made between denoising autoencoders and empirical Bayes in the literature.  Samples in $\mathbb{R}^d$ are obtained by walk-jump sampling~\citep{saremi2019neural} via underdamped Langevin MCMC (walk) to sample from M-density  and the multimeasurement Bayes estimation (jump). We study permutation invariant Gaussian M-densities on MNIST, CIFAR-10, and FFHQ-256 datasets, and demonstrate the effectiveness of this framework for realizing fast-mixing stable Markov chains in high dimensions.
\end{abstract}  


\section{Introduction}  \label{sec:intro}

Consider a collection of i.i.d.\,samples $\{x_i\}_{i=1}^n$, assumed to have been drawn from an \emph{unknown} distribution with density $p_X$ in $\mathbb{R}^d$. An important problem in probabilistic modeling is the task of drawing independent samples from $p_X$, which has numerous potential applications. This problem is typically approached in two phases: approximating $p_X$, and drawing samples from the approximated density. In unnormalized models the first phase is approached by learning the energy function $f_X$ associated with the Gibbs distribution $p_X \propto \exp(-f_X)$, and for the second phase one must resort to Markov chain Monte Carlo methods, such as Langevin MCMC, which are typically very slow to mix in high dimensions. MCMC sampling is considered an \say{art} and we do not have black box samplers that converge fast and are stable for complex (natural) distributions. The source of the problem is mainly attributed to the fact that the energy functions of interest are typically highly nonconvex. 

A broad sketch of our solution to this problem is to model a \emph{smoother} density in an M-fold expanded space.  The new density denoted by $p(\y)$, called M-density, is defined in $\mathbb{R}^{Md}$, where the boldfaced $\y$ is a shorthand for $(y_1,\dots,y_M)$. M-density is smoother in the sense that its marginals $p_m(y_m)$ are obtained by the convolution $p_m(y_m) = \int p_m(y_m\vert x) p(x) dx$ with a smoothing kernel $p_m(y_m \vert x)$ which for most of the paper we take to be the isotropic Gaussian: $$Y_m=X+N(0,\sigma_m^2 I_d).$$ Although we bypass learning $p(x)$, the new formalism allows for generating samples from $p(x)$ since $X$ can be estimated exactly given $\Y=\y$ (for large $M$). To give a physical picture, our approach here is based on \say{taking apart} the complex manifold where the random variable $X$ is concentrated in $\mathbb{R}^{d}$ and mapping it to a smoother manifold in $\mathbb{R}^{Md}$  where $\Y=(Y_1,\dots,Y_M)$ is now concentrated.

Smoothing a density with a kernel is a technique in nonparametric density estimation that goes back to~\citet{parzen1962estimation}. In kernel density estimation, the estimator of $p(x)$ is obtained by convolving the empirical measure with a kernel. In that methodology, the kernel bandwidth ($\sigma$, for Gaussian kernels) is adjusted to estimate $p(x)$ in $\mathbb{R}^d$ given a collection of independent samples $\{x_i\}_{i=1}^n$. This estimator, like most nonparametric estimators, suffers from a severe curse of dimensionality~\citep{wainwright2019high}. But what if the kernel  bandwidth is \emph{fixed}: how much easier is the problem of estimating $p(y)$? This question is answered in~\citep{goldfeld2020convergence}, where they obtained the rate of convergence $e^{O(d)}n^{-1/2}$ (measured using various distances) in remarkable contrast to the well-known $n^{-1/d}$ rate for estimating $p(x)$. This nonparametric estimation result is not directly relevant here, but it formalizes the intuition that learning $p(y)=\int p(y|x) p(x) dx$ is a lot simpler than learning $p(x)$.

With this motivation, we start with an introduction to the problem of learning unnormalized $p(y)$, based on independent samples from $p(x)$.  This problem was formulated by~\citet{vincent2011connection} using score matching~\citep{hyvarinen2005estimation}. It was approached recently with the more fundamental methodology of empirical Bayes~\citep{saremi2019neural}. The idea is to use the Bayes estimator of $X$ given $Y=y$, the study of which is at the root of the \emph{empirical Bayes approach to statistics}~\citep{robbins1956empirical}, in a least-squares objective. This machinery builds on the fact that the estimator $\widehat{x}(y)=\expectation[X|Y=y]$ can be expressed in closed form in terms of unnormalized $p(y)$ (\autoref{sec:neb}). For Gaussian kernels, the learning objective arrived at in~\citep{saremi2019neural} is identical (up to a multiplicative constant) to the denoising score matching formulation~\citep{vincent2011connection}, but with new insights rooted in empirical Bayes which is \emph{the} statistical framework for denoising.

\emph{The main problem with the empirical Bayes methodology is that $p(x|y)$ remains unknown and cannot be sampled from.} The estimator  $\widehat{x}(y) = \expectation[X|Y=y]$ can be computed, but the concentration of the posterior $p(x|y)$ around the mean is not in our control. Our solution to this problem starts with an observation that is very intuitive from a Bayesian perspective: one can sharpen the posterior by simply taking more independent noisy measurements. This scheme is formalized by replacing $p(y|x)$ with the factorial kernel $p(\y|x)$: \< \label{eq:mnm} p(\y|x) = \prod_{m=1}^M p_m(y_m|x), \text{ where } \y=(y_1,\dots, y_M), \> which we name \textbf{multimeasurement noise model} (MNM). Now, the object of interest is a different density which we call \textbf{M-density} obtained by convolving $p(x)$ with the factorial kernel:
\<\label{eq:m-density} p(\y) = \int p(\y|x) p(x) dx.\>
This formally maps the original problem of drawing samples from $p(x)$ to drawing samples from $p(\y)$ for \emph{any} fixed noise level since the estimator of $X$ given $\Y=\y$ is asymptotically exact. We quantify this for Gaussian MNMs using the plug-in estimator (the empirical mean of measurements). 

\emph{Smooth \& Symmetric!} Consider Gaussian MNMs with equal noise level $\sigma$ in the regime of large $\sigma$, large $M$ such that $\sigma\sqrt{d /M}$ is \say{small}.\footnote{The regime $\sigma\sqrt{d /M} \ll 1$ is obtained in our analysis of the highly suboptimal plug-in estimator (\autoref{sec:concentraion}).} In that regime, the complex manifold associated with the data distribution is mapped to a very smooth symmetric manifold in a much higher dimensional space. The original manifold can be reconstructed via a \emph{single step} by computing $\widehat{x}(\y)$. Due to equal noise levels, the manifold associated with M-density is symmetric under the permutation group: \<p(y_1,\dots,y_M) = p(y_{\pi(1)}, \dots, y_{\pi(M)}),\> where $\pi$ is a permutation of indices (\autoref{fig:m-density}). Although we develop a general methodology for studying M-densities, in the later part of the paper we focus on permutation invariant Gaussian M-densities.

The paper is organized as follows. In \autoref{sec:formalism}, we derive Bayes estimators for Poisson and Gaussian MNMs. In \autoref{sec:objective}, we present the least-squares objective for learning Gaussian M-densities. We also give a weaker formulation of the learning objective based on score matching. \autoref{sec:parametrization} is devoted to the important topic of parametrization, where we introduce \textbf{multidenoising autoencoder} (MDAE) in which we formally connect M-densities to the DAE literature. DAEs have never been studied for factorial kernels and the emergence of MDAE as a generative model should be of wide interest. In addition, we introduce  \textbf{metaencoder} formulated in an unnormalized latent variable model, which is mainly left as a side contribution.  In Sec.~\ref{sec:wjs}, we present the sampling algorithm used in the paper. In \autoref{sec:experiments}, we present our experiments on MNIST, CIFAR-10, and FFHQ-256 datasets which were focused on permutation invariant M-densities. The experiments are mainly of qualitative nature demonstrating the effectiveness of this method in generating fast mixing and very long Markov chains in high dimensions. Related works are discussed in \autoref{sec:related}, and we finish with concluding remarks.

\clearpage

 \begin{figure}[t!] 
\begin{center}
\begin{subfigure}[$p(x)$]
 {\includegraphics[width=0.24\textwidth]{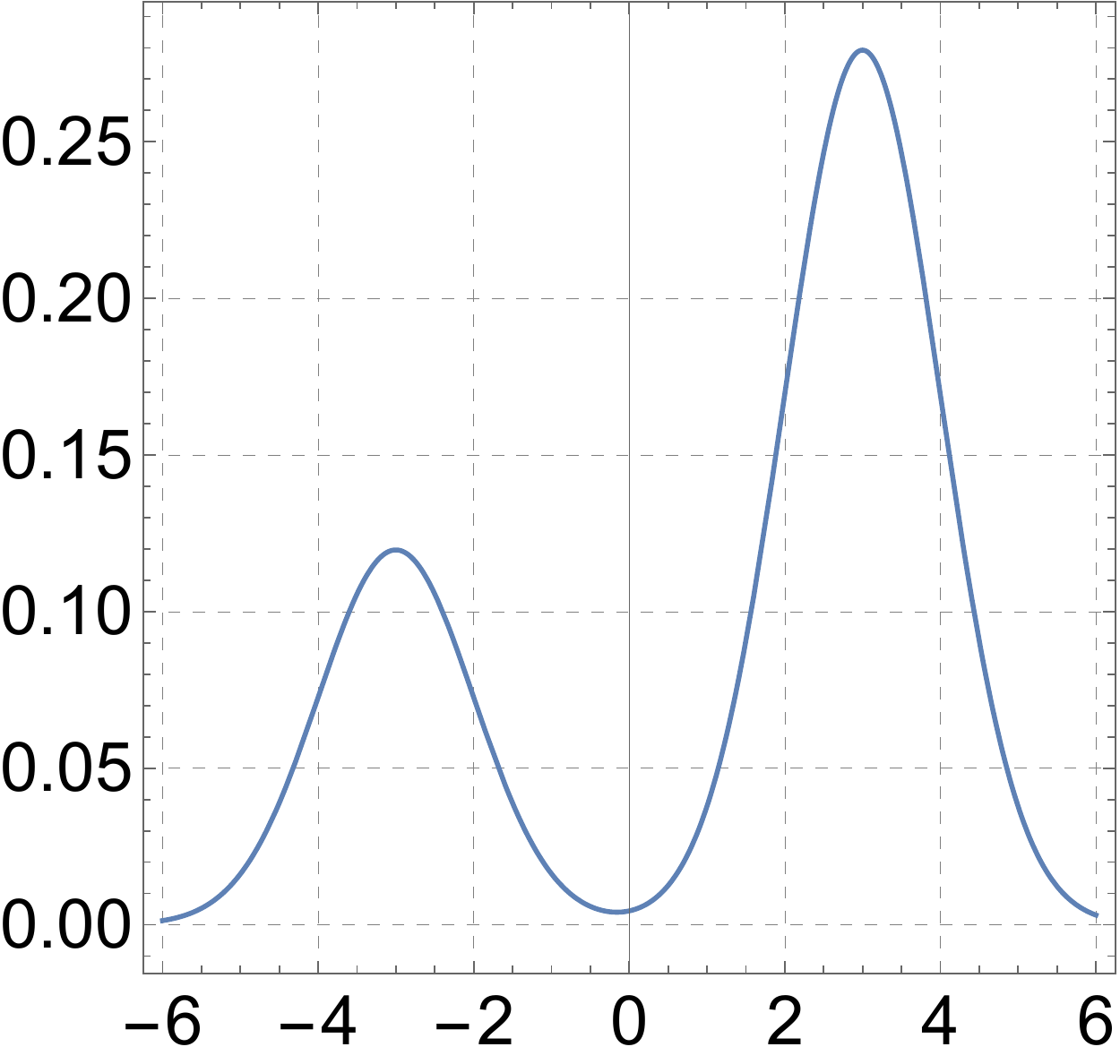}}
\end{subfigure}
\begin{subfigure}[$p(\y)$]
 {\includegraphics[width=0.24\textwidth]{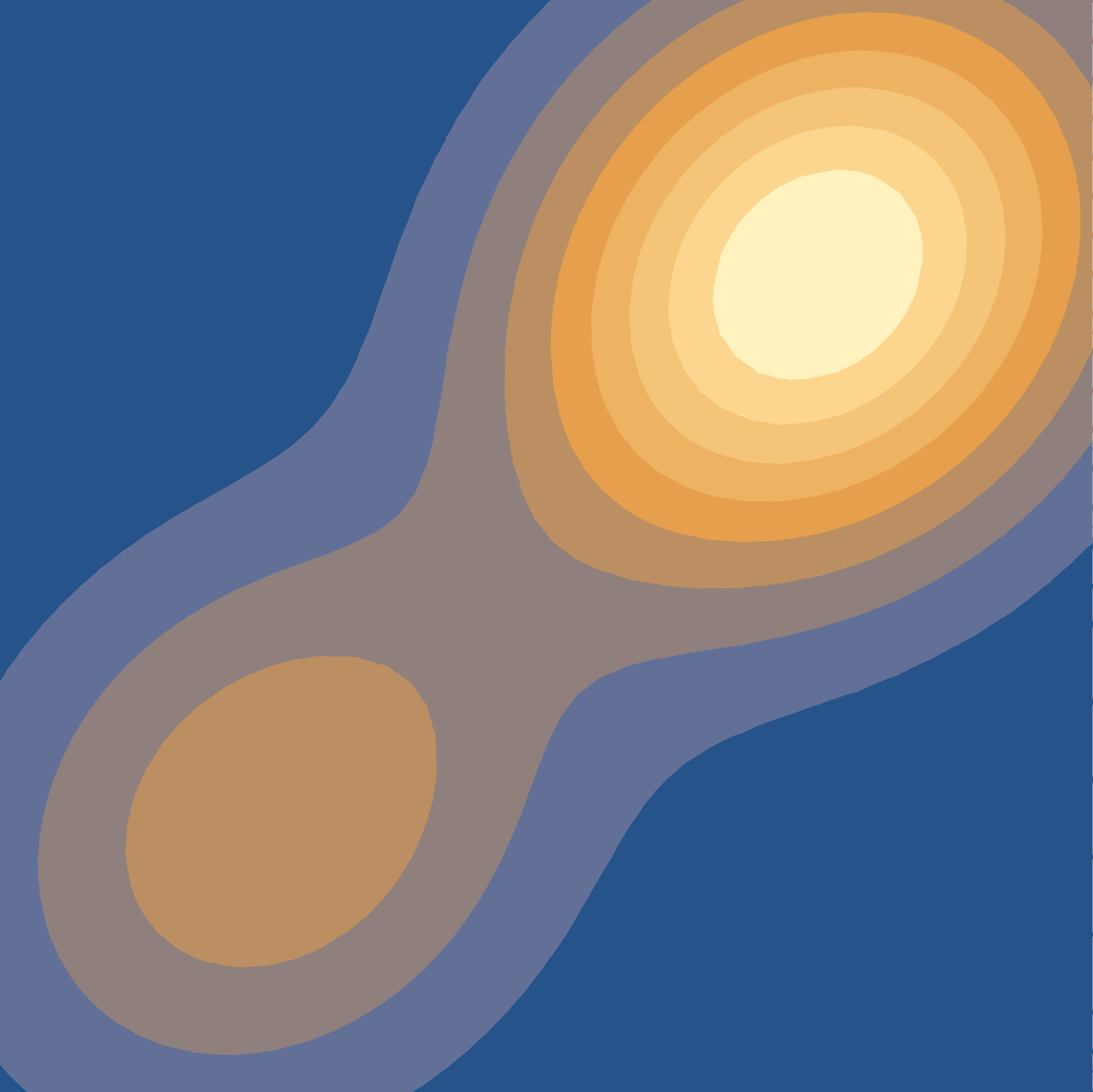}}
\end{subfigure}
\begin{subfigure}[$\log p(\y)$]
 {\includegraphics[width=0.24\textwidth]{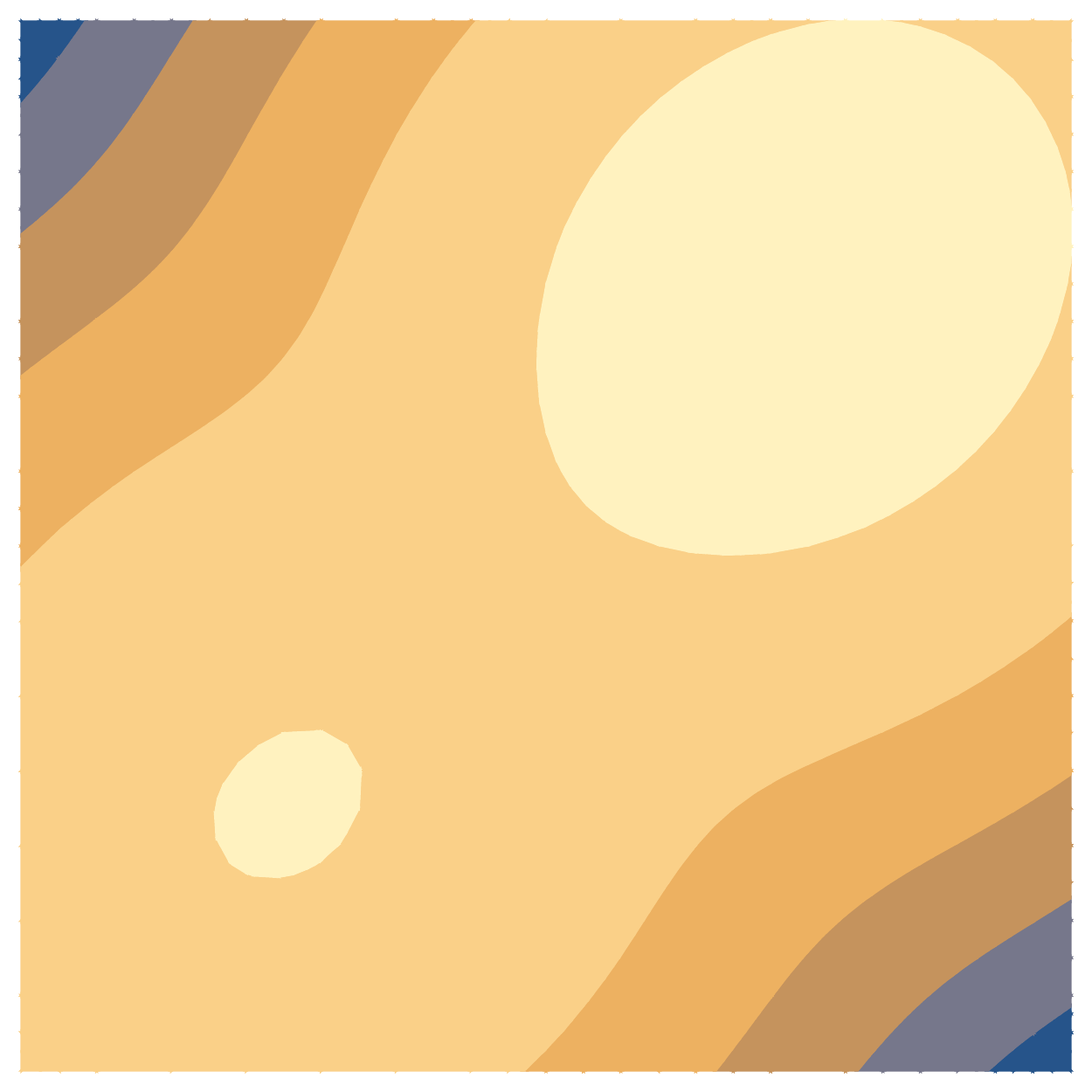}}
\end{subfigure}
\begin{subfigure}[$\nabla \log p(\y)$]
 {\includegraphics[width=0.24\textwidth]{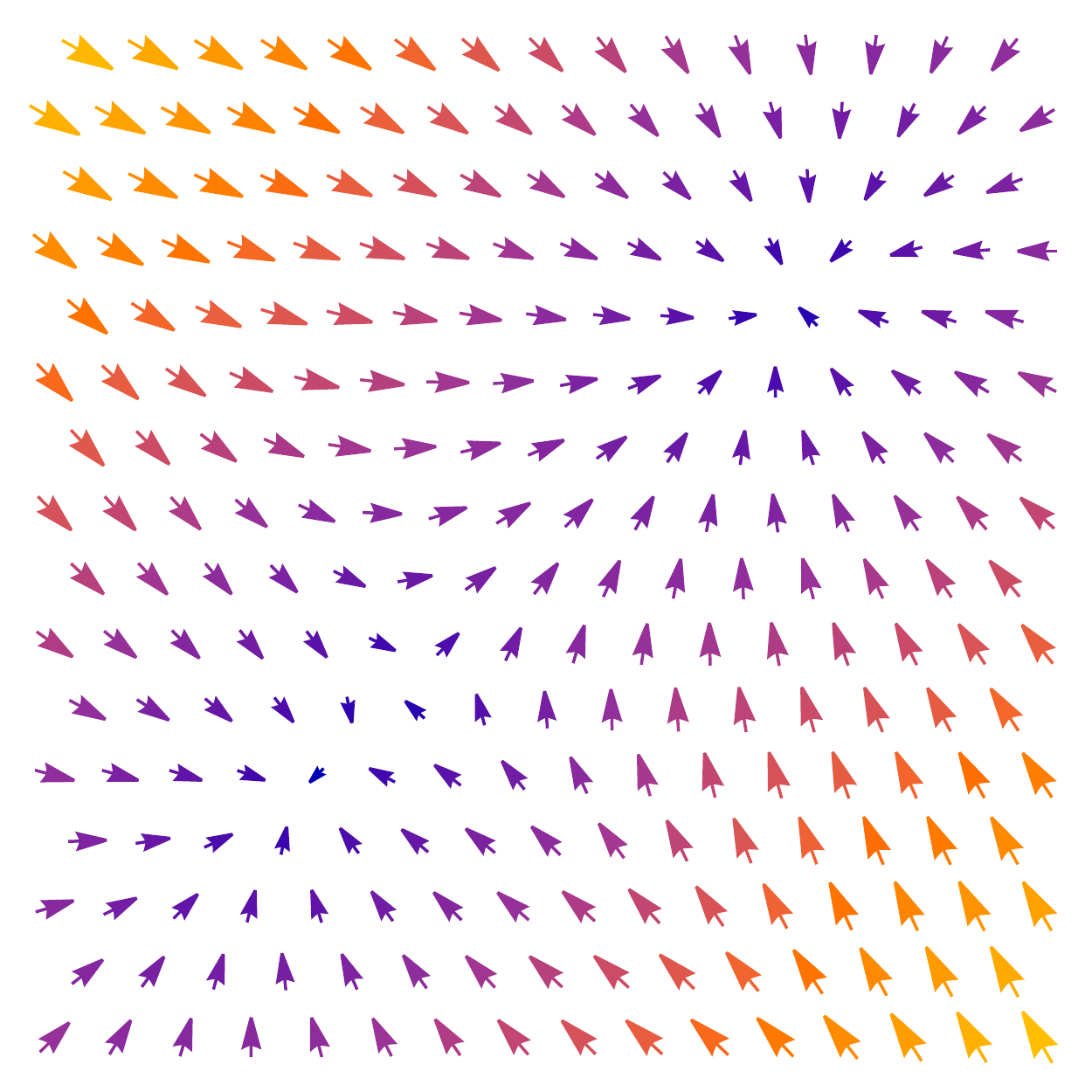}}
\end{subfigure}
\end{center}
\caption{({\it M-density}) (a) A mixture of Gaussian in 1d. (b,c,d) The M-density ($M=2$, $\sigma_1=\sigma_2$), the corresponding log-density and  score function are visualized (based on calculations in \autoref{sec:app:gaussian-mnm}).} 
\label{fig:m-density}
 \end{figure}
 
\paragraph{Notation.} The subscripts are dropped from densities and energy functions when it is clear from their arguments: $p(\y) = p_\Y(\y), p(\y|x)=p_{\Y|X=x}(\y), f(\y) = f_\Y(\y)$, etc. Bold fonts are reserved for multimeasurement random variables: $\Y=(Y_1,\dots,Y_M).$  The following are shorthand notations: $[M]=\{1,\dots,\M\}$ and $\nabla_\m = \nabla_{y_m}$. Throughout, $\nabla$ is the gradient with respect to inputs (in $\mathbb{R}^{Md}$), not parameters. The following convention is used regarding parametric functions: $f_\parameters(\cdot)=f(\cdot;\parameters)$. Different parametrization schemes come with a different set of parameters, the collection of which we denote by $\parameters$.   For all the datasets used in the paper, $X$ takes values in the hypercube $[0,1]^d$. 

\section{Formalism: Multimeasurement Bayes Estimators} \label{sec:formalism}
This work is based on generalizing the empirical Bayes methodology to MNMs.  It is well known that the least-squares estimator of $X$ given $\Y=\y$ (for any noise model) is the Bayes estimator:
\< \label{eq:multibayes} \widehat{x}(\y) = \frac{\int x p(\y|x) p(x) dx }{\int  p(\y|x) p(x) dx}. \>
Next we study this estimator \`{a} la~\citet{robbins1956empirical} for Poisson (the Poisson kernel was the first example studied in 1956) and Gaussian MNMs. In both cases the estimator $\widehat{x}(\y) $ is derived to be a functional of the joint density $p(\y)$. In addition, $\widehat{x}(\y)$ is invariant to scaling $p(\y)$ by a constant, therefore \emph{one can ignore the partition function in this estimation problem}. This is the main appeal of this formalism. Analytical results for Poisson MNMs are included to demonstrate the generality of the new formalism, but we will not pursue it as a generative model in our experiments for technical reasons due to the challenges regarding sampling discrete distributions in high dimensions (see {\autoref{remark:app:poisson-wjs}}).

\subsection{Poisson MNM} \label{sec:poisson}
Let $X$ be a random variable taking values in $\mathbb{R}_+$. The Poisson MNM is defined by:
$$ p(\y|x) = e^{-\M x} \prod_{l=1}^\M  \frac{x^{y_l}}{y_l!},~y_l \in \mathbb{N}.$$ 
The numerator in r.h.s. of \autoref{eq:multibayes} is computed next. The measurement index $\arbitrary$ below is an arbitrary index in $[M]$ used for absorbing $x$ such that $xp(\y|x)$ has the same functional form as $p(\y|x)$:
 $$
 \int x p(\y|x) p(x) dx =  \int  e^{-\M x} (y_\arbitrary+1)  \frac{x^{(y_\arbitrary+1)}}{(y_\arbitrary+1)!} \prod_{l\neq \arbitrary}  \frac{x^{y_l}}{y_l!} p(x) dx = (y_\arbitrary+1) p(\y+ \mathbf{1}_\arbitrary),
 $$
where $\mathbf{1}_\arbitrary$ is defined as a vector whose component $l$ is $\delta_{ \arbitrary l}$. Using \autoref{eq:multibayes}, it immediately follows
 \< \label{eq:xhat-poisson} \widehat{x}(\y) = (y_\arbitrary+1) \frac{p(\y+ \mathbf{1}_\arbitrary)}{p(\y)},~ \arbitrary\in[\M]. \>
We emphasize that the dependency of $\widehat{x}(\y)$ on the \emph{noise channel} (measurement index) $\arbitrary$ that appears on the right hand side of the expression above is only an artifact of the calculation (we observe this again for Gaussian MNMs). The result above holds for any measurement index $\arbitrary \in [\M],$ therefore $$ (y_\m+1) p(\y+ \mathbf{1}_\m) = (y_{\m'}+1) p(\y+ \mathbf{1}_{\m'})~\text{for all~} m, m' \in [\M].$$
\paragraph{Example.} We can derive the estimator $\widehat{x}(\y)$ analytically for $p(x) = e^{-x}$.  We first derive $p(\y)$:\footnote{The derivation is straightforward using $\int_{0}^{\infty} e^{-\alpha x} x^{\beta} dx = \alpha^{-1-\beta} \beta!$ for $\alpha>0, \beta \in \mathbb{N}.$}
$$ p(\y) = \frac{(\sum_l y_l)!}{\prod_l y_l!} (\M+1)^{-1-\sum_l y_l},$$
where the sums/products are over the measurement indices $l\in [\M]$. Using \autoref{eq:xhat-poisson}, it follows
$$
 \widehat{x}(\y) = (y_\arbitrary+1)\frac{p(\y+\mathbf{1}_\arbitrary)}{p(\y)} = (y_\arbitrary+1) \frac{\sum_l y_l +1}{y_\arbitrary+1}\frac{1}{\M+1} = \frac{\sum_l y_l +1}{M+1}
$$
As expected one arrives at the same result by computing \autoref{eq:xhat-poisson} for any measurement index $\arbitrary$. 

\subsection{Gaussian MNM} \label{sec:gauss}
Let $X$ be a random variable in $\mathbb{R}^d$. The Gaussian MNM is defined by: 
 \< \label{eq:gaussian-mnm} p(\y|x) = \prod_{m=1}^\M p_\m(y_\m |x),~\text{where~}~p_\m(y_\m|x) = \mathcal{N}(y_\m ; x,\sigma_\m^2 I_d).\> It follows (as in the Poisson MNM, $\arbitrary$ in the equation below is an arbitrary index in $[M]$):
$$ \sigma_\arbitrary^2 \nabla_\arbitrary p(\y|x) = (x-y_\arbitrary) p(\y|x). $$
We multiply both sides of the above expression by $p(x)$ and integrate over $x$. The derivative $\nabla_\arbitrary$ (with respect to $y_\arbitrary$) and the integration over $x$ commute, and using \autoref{eq:multibayes} it follows
$$  \sigma_\arbitrary^2 \nabla_\arbitrary p(\y) = \widehat{x}(\y) p(\y) -y_\arbitrary p(\y),$$
which we simplify by dividing both sides by $p(\y)$:
\< \label{eq:m-xhat-gaussian} \widehat{x}(\y) = y_\arbitrary + \sigma_\arbitrary^2 \nabla_\arbitrary \log p(\y),\,\arbitrary \in [M].\>
This expression is the generalization of the known result due to~\citet{miyasawa1961empirical}. As in the Poisson MNM, the result above holds for any $\arbitrary \in [\M],$ therefore:
\< \label{eq:constraint} y_\m + \sigma_\m^2 \nabla_\m \log p(\y) = y_{\m'} + \sigma_{\m'} ^2 \nabla_{\m'} \log p(\y)~\text{for all~} m, m' \in [\M].\>

\paragraph{Example.} We also studied the M-density for Gaussian MNMs analytically. The calculations are insightful and give more intuitions on the new formalism. We refer to \textbf{\autoref{sec:app:gaussian-mnm}} for the results.

\begin{remark}[$\widehat{x}_\theta^{(m)}(\y)$ notation]  \label{remark:superscript} For parametric M-densities, \autoref{eq:constraint} is in general an approximation. We use the superscript $m$ to emphasize that there are $M$ ways to compute the parametric estimator: $$ \label{eq:superscript} \widehat{x}_\theta^{(m)}(\y) = y_\m + \sigma_\m^2 \nabla_\m \log p_\theta(\y)$$
\end{remark}

\begin{remark}[$\sigma \otimes M$ notation] \label{remark:sigmaxM} Gaussian MNMs with the constant noise level $\sigma$ are denoted by $\sigma \otimes M$. For $\sigma \otimes M$ models, the M-density $p(\y)$ \emph{(}resp. the score function $\nabla \log p(\y)$\emph{)} is invariant (resp. equivariant) with respect to the permutation group $\pi: [M] \rightarrow [M].$ See \autoref{fig:m-density} for an illustration.\end{remark}

\subsection{Concentration of the plug-in estimator} \label{sec:concentraion}
The plug-in estimator of $X$ is the empirical mean of the multiple noisy measurements we denote by $\widehat{x}_{\mean}(\y)=M^{-1}\sum_m y_m$. This estimator is highly suboptimal but its analysis in high dimensions for Gaussian MNMs is useful. Due to the concentration of measure phenomenon~\citep{tao2012topics} we have \<\big\Vert x - \widehat{x}_{\mean}(\y) \big\Vert_2 \approx \sigma_{\eff} \sqrt{d},\> where $\sigma_{\eff}$ (\say{eff} is for effective) is given by\< \label{eq:sigma_eff} \sigma_{\eff} =  \frac{1}{M} \left(\sum_{m=1}^\M \sigma_\m^2\right)^{1/2}.\>
The calculation is straightforward since $y_m = x + \varepsilon_m$, where $\varepsilon_m \sim N(0,\sigma_m^2 I_d)$. It follows: $x - \widehat{x}_{\mean}(\y)=-M^{-1} \sum_{m} \varepsilon_m$  which has the same law as $N(0,\sigma_{\eff}^2 I_d)$. This calculation shows that the estimator of $X$ in $\mathbb{R}^d$ concentrates at the true value at a worst-case rate $O(\sqrt{d/M})$ (consider replacing the sum in \autoref{eq:sigma_eff} by $M  \sigma_{\rm max}^2$). This analysis is very conservative, as it ignores the correlations between the components of $\y$ in $\mathbb{R}^{Md}$ (within and across noise channels), and one expects a (much) tighter concentration for the \emph{optimal} estimator. In the next section, we present an algorithm to learn the multimeasurement Bayes estimator based on independent samples from $p(x)$.

\section{Learning Gaussian M-densities} \label{sec:objective}
\subsection{Neural Empirical Bayes}
\label{sec:neb}

In this section, we focus on learning Gaussian M-densities using the empirical Bayes formalism. We closely follow the approach taken by~\citet{saremi2019neural} and extend it to M-densities. The power of empirical Bayes lies in the fact that it is formulated in the absence of any clean data. This is reflected by the fact that $\widehat{x}(y)$ is expressed in terms of $p(y)$ which can in principle be estimated without observing samples from $p(x)$~\citep{robbins1956empirical}; we generalized that to factorial kernels in \autoref{sec:formalism}. What if we \emph{start} with independent samples $\{ x_i\}_{i=1}^n$ from $p(x)$ and our goal is to \emph{learn} $p(\y)$?

 A key insight in \emph{neural empirical Bayes} was that the empirical Bayes machinery can be turned on its head in the form of a Gedankenexperiment~\citep{saremi2019neural}: we can draw samples (indexed by $j$) from the factorial kernel $\y_{ij} \sim p(\y|x_i)$ and feed the noisy data to the empirical Bayes \say{experimenter} (the word used in 1956). The experimenter's task (our task!) is to estimate $X$, but since we observe $X=x_i$, the squared $\ell_2$ norm $\big\Vert x_i - \widehat{x}_\parameters(\y_{ij}) \big\Vert_2^2$ serves as a signal to learn $p(\y)$, and also $\widehat{x}(\y)$ for unseen noisy data (as a reminder the Bayes estimator is the least-squares estimator). 

Next, we present the least-squares objective to learn $p(\y) \propto e^{-f(\y)}$ for Gaussian M-densities. It is important to note that $\widehat{x}(\y)$ is expressed in terms of \emph{unnormalized} $p(\y)$, without which we must estimate the partition function (or its gradient) during learning. Here, we can choose any expressive family of functions to parametrize $\widehat{x}(\y)$ which is key to the success of this framework. Using our formalism (\autoref{sec:gauss}), the Bayes estimator takes the following parametric form (see \autoref{remark:superscript}):
\< \label{eq:xhat-parametric} \widehat{x}_\parameters^{(m)}(\y) = y_\m-\sigma_\m^2 \nabla_\m \energy_\parameters(\y),~ m \in [M]. \>
There are therefore $M$ least-squares learning objectives 
\< \label{eq:M-losses} \mathcal{L}^{(\m)}(\parameters) = \mathbb{E}_{(x,\y)} \mathcal{L}^{(\m)}(x,\y; \parameters),~\text{where}~\mathcal{L}^{(\m)}(x,\y; \parameters)=\big \Vert x-\widehat{x}_\parameters^{(m)}(\y)  \big \Vert_2^2\>
that in principle need to be minimized simultaneously, since as a  corollary of \autoref{eq:constraint} we have:
\< \label{eq:constraint-loss} \mathcal{L}^{(\m)}(x,\y; \parameters^*) \approx \mathcal{L}^{(\m')}(x,\y; \parameters^*) ~\text{for all~} m, m' \in [\M],~x \in \mathbb{R}^d,~\y \in \mathbb{R}^{Md}.\>
The balance between the $M$ losses can be enforced during learning by using a softmax-weighted sum of them in the learning objective, effectively weighing the higher losses more in each update. However, in our parametrization schemes, coming up,  simply taking the mean of the $M$ losses as the learning objective proved to be sufficient for a balanced learning across all the noise channels: 
\< \label{eq:mems} \mathcal{L}(\parameters) = \frac{1}{M} \sum_{m=1}^M   \mathcal{L}^{(\m)}(\parameters) . \>
The above learning objective is the one we use in the remainder of the paper.

\subsection{Denoising Score Matching} \label{sec:score-matching}
One can also study M-densities using score matching with the following objective~\citep{hyvarinen2005estimation}:
\< \label{eq:score-matching} \mathcal{J}(\parameters) = \mathbb{E}_{\y}  \big \Vert -\nabla_\y f_\parameters(\y) - \nabla_\y \log p(\y)  \big \Vert_2^2.\>
In \textbf{\autoref{sec:app:score-matching}}, we show that the score matching learning objective is equal (up to an additive constant independent of $\parameters$) to the following \emph{multimeasurement denoising score matching} (MDSM) objective:
\< \label{eq:MDSM} \mathcal{J}(\parameters) = \sum_{m=1}^M \mathcal{J}_m(\parameters),~\text{where }~ \mathcal{J}_{m}(\parameters) = \mathbb{E}_{(x,\y)} \big \Vert -\nabla_m f_\parameters(\y) + \frac{y_m - x}{\sigma_m^2} \big \Vert_2^2~.\>
This is a simple extension of the result  by~\citet{vincent2011connection} to Gaussian MNMs. The MDSM objective and the empirical Bayes' (\autoref{eq:mems}) are identical (up to a multiplicative constant) for $\sigma \otimes M$ models.

\begin{remark} 
	{Compared to neural empirical Bayes (NEB), denoising score matching (DSM) takes a very different approach regarding learning M-densities. DSM starts with score matching (\autoref{eq:score-matching}). NEB starts with deriving the Bayes estimator of $X$ given $\Y=\y$  for a known kernel $p(\y|x)$ (\autoref{eq:m-xhat-gaussian}). NEB is a stronger formulation in the sense that two goals are achieved at once: learning M-density and learning $\widehat{x}(\y)$. What remains unknown in DSM is the latter, and knowing the estimator is key here. Without it, we cannot draw a formal equivalence between $p_X$ and $p_\Y$ (see \autoref{sec:intro}). We return to this discussion at the end of the paper from a different angle with regards to denoising autoencoders. }
\end{remark}

\clearpage

\section{Parametrization Schemes} \label{sec:parametrization}

We present three parametrization schemes for modeling Gaussian M-densities. Due to our interest in $\sigma \otimes M$ models we switch to a lighter notation. Here, the learning objective (\autoref{eq:mems}) takes the form
\< \label{eq:simpleloss} \mathcal{L}(\parameters) = \frac{1}{M}\,\mathbb{E}_{(x,\y)\sim p(\y|x) p(x)} \big\Vert x\otimes M - \y +  \sigma^2 \nabla f_\parameters(\y) \big\Vert_2^2, \>
where $x\otimes M$ denotes $(x,\dots,x)$ repeated $M$ times. Parametrization schemes fall under two general groups: \emph{multimeasurement energy model} (MEM) and \emph{multimeasurement score model} (MSM). In what follows, MDAE is an instance of MSM, MEM$^2$ \& MUVB are (closely related) instances of MEM.

\subsection{MDAE ({multidenoising autoencoder})}  \label{sec:mdae}
The rigorous approach to learning Gaussian M-densities is to parametrize the energy function $f$. 
In that parametrization, $\nabla f_\parameters$ is computed with automatic differentiation and used in the objective (\autoref{eq:simpleloss}). That is a computational burden, but it comes with a major advantage as the learned score function is guaranteed to be a \emph{gradient field}. The direct parametrization of $\nabla f$ is  problematic due to this requirement analyzed by~\citep{saremi2019approximating}; see also~\citep{salimans2021should} for a recent discussion.   Putting that debate aside, in MDAE we parametrize the score function explicitly $\g_{\parameters}: \mathbb{R}^{Md} \rightarrow \mathbb{R}^{Md}.$ Then, $\g_\theta$ replaces $-\nabla f_\theta$ in \autoref{eq:simpleloss}. In particular, we consider the following reparametrization: \< \label{eq:g} \g_{\parameters}(\y) \coloneqq (\Decoder_{\parameters}(\y) -\y)/\sigma^2,\> simply motivated by the fact we would like to cancel $\y$ in \autoref{eq:simpleloss}, otherwise the loss starts at very high values at the initialization. This is especially so in the regime of large $\sigma$, $M$, and   $d$.  It follows:
\< \label{eq:mdae} \mathcal{L}(\parameters) = M^{-1}\,\mathbb{E}_{(x,\y)\sim p(\y|x) p(x)} \big\Vert x \otimes M - \Decoder_{\parameters}(\y)\big\Vert_2^2. \>
This is very intriguing and it is worth taking a moment to examine the result: modeling M-densities is now formally mapped to denoising multimeasurement noisy data \emph{explicitly}.  It is a DAE loss with a multimeasurement twist. Crucially, due to our empirical Bayes formulation, the MDAE output $\Decoder_{\parameters}(\y)$ becomes a parametrization of the Bayes estimator(s) (combine \autoref{eq:g} and \autoref{eq:xhat-parametric}): \< \widehat{x}^{(m)}(\y;\parameters)=\decoder_m(\y;\parameters).\> This makes a strong theoretical connection between our generalization of empirical Bayes to factorial kernels  and a new learning paradigm under \emph{multidenoising autoencoders}, valid for any noise level $\sigma$.

\subsection{MEM$^2$ (mem {\rm with a} quadratic {\rm  form with an optional} metaencoder)} \label{sec:mem}
Can we write down an energy function associated with the score function in \autoref{eq:g}? The answer is no, since $\g_\theta$ is not a gradient field in general, but we can try the following ($\theta=(\eta,\zeta)$):
\< \label{eq:mem} f_\parameters(\y) \coloneqq \frac{1}{2\sigma^2}  \big\Vert \y - \Decoder_{\eta}(\y) \big \Vert_2^2 + h_\zeta(\y,\Decoder_{\eta}(\y)).  \>
Ignoring $h_\zeta$ for now, by calculating $-\nabla f_\parameters$ we do get both terms in \autoref{eq:g}, plus other terms. The function $h_\zeta$ which we call \emph{metaencoder} is optional here. Intuitively, it captures the \say{higher order interactions} between $\y$ and $\bnu$, beyond the quadratic term (see below for another motivation).  The metaencoder is implicitly parametrized by $\eta$ (via $\bnu_\eta$), while having its own set of parameters $\zeta$.

\subsection{MUVB (multimeasurement unnormalized variational bayes)} \label{sec:muvb}

The expression above (\autoref{eq:mem}) is a simplification of an unnormalized latent variable model that one can set up, where we take the variational free energy to be the energy function. This builds on recent studies towards bringing together empirical Bayes and variational Bayes~\citep{saremi2020learning, saremi2020unnormalized}. We outline the big picture here and refer to \textbf{\autoref{sec:app:muvb}} for details.  Latent variable models operate on the principle of maximum likelihood, where one is obliged to have normalized models. Since our model is unnormalized we consider setting up a latent variable model with unnormalized conditional density. Essentially both terms in \autoref{eq:mem} arise by considering ($z$ is the vector of latent variables)
\< \label{eq:meta} p_{(\eta, \zeta)}(\y|z) \propto   \exp\left(-\frac{1}{2\sigma^2}\Vert \y - \Decoder_{\eta}(z) \big \Vert_2^2 -h_\zeta(\y,\Decoder_{\eta}(z))\right),\>
named \emph{metalikelihood} which further underscores the fact that it is unnormalized. The full expression for the energy function also involves the posterior $q_\phi(z|\y)$. As a remark, note that what is shared between all three parametrization schemes is $\Decoder$, although vastly different in how $\widehat{x}(\y)$ is computed.

\clearpage

\section{Sampling Algorithm} \label{sec:wjs}
Our sampling algorithm is an adaptation of \emph{walk-jump sampling} (WJS)~\citep{saremi2019neural}. We run an MCMC algorithm to sample M-density by generating a Markov chain of multimeasurement noisy samples. This is the \emph{walk} phase of WJS schematized by the dashed arrows in \autoref{fig:wjs}. At \emph{arbitrary} discrete time $k$, clean samples are generated by simply computing $\widehat{x}(\y_k)$ ($\theta$ is dropped for a clean notation). This is the \emph{jump} phase of WJS schematized by the solid arrow in \autoref{fig:wjs}. What is appealing about \emph{multimeasurement generative models} is the fact that for large $M$, $p(x|\y_k)$ is highly concentrated around its mean $\widehat{x}(\y_k)$, therefore this scheme is an exact sampling scheme\textemdash this was in fact our original motivation to study M-densities. For sampling the M-density (the walk phase), we consider Langevin MCMC algorithms that are based on discretizing the underdamped  Langevin diffusion:
\< \label{eq:diffusion}
\begin{split}
	d \vv_t &= - \gamma \vv_t dt - u \nabla f(\y_t) dt + (\sqrt{2 \gamma u}) d\B_t, \\
	d\y_t &= \vv_t dt.
\end{split}
\>
Here $\gamma$ is the friction, $u$ the inverse mass, and $\B_t$ the standard Brownian motion in $\mathbb{R}^{Md}$. Discretizing the Langevin diffusion and their analysis are challenging problems due to the non-smooth nature of the Brownian motion~\citep{morters2010brownian}. There has been a significant progress being made however in devising and analyzing Langevin MCMC algorithms, e.g. \citet{cheng2018underdamped} introduced an algorithm with a mixing time that scales as $O(\sqrt{d})$ in a notable contrast to the best known $O(d)$ for MCMC algorithms based on overdamped  Langevin diffusion. \emph{The dimension dependence of the mixing time is of great interest here since we expand the dimension M-fold.} To give an idea regarding the dimension, for the $4\otimes 8$ model on FFHQ-256, $Md \approx 10^6$. We implemented \citep[Algorithm 1]{cheng2018underdamped} in this paper which to our knowledge is its first use for generative modeling. In addition, we used a Langevin MCMC algorithm due to~\citet{sachs2017langevin}, also used by~\citet{arbel2020generalized}. Note, in addition to $\gamma$ and $u$, Langevin MCMC requires $\delta$, the step size used for discretizing \autoref{eq:diffusion}.


\begin{figure}[h!]
\floatbox[{\capbeside\thisfloatsetup{capbesideposition={right, top},capbesidewidth=9.1cm}}]{figure}[\FBwidth]
{\caption{(\emph{WJS schematic}) In this schematic, the Langevin \emph{walk} is denoted by the dashed arrow. The \emph{jump} is denoted by the solid arrow which is deterministic. The jumps in WJS are asynchronous (\autoref{remark:app:async}). In presenting long chains in the paper we show jumps at various frequencies denoted by $\Delta k$ (\autoref{remark:app:deltak}). We use the same MCMC parameters for all noise channels due to permutation symmetry in $\sigma \otimes M$ models. See \textbf{\autoref{sec:app:wjs}} for more details. }\label{fig:wjs}}
{\includegraphics[width=0.3\textwidth]{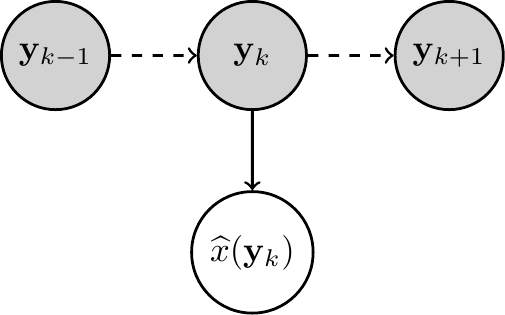}}
\end{figure}
\begin{figure}[h!] 
\begin{center}
\begin{subfigure}[$4\otimes 4$, MDAE ]
 {\includegraphics[width=0.49\textwidth]{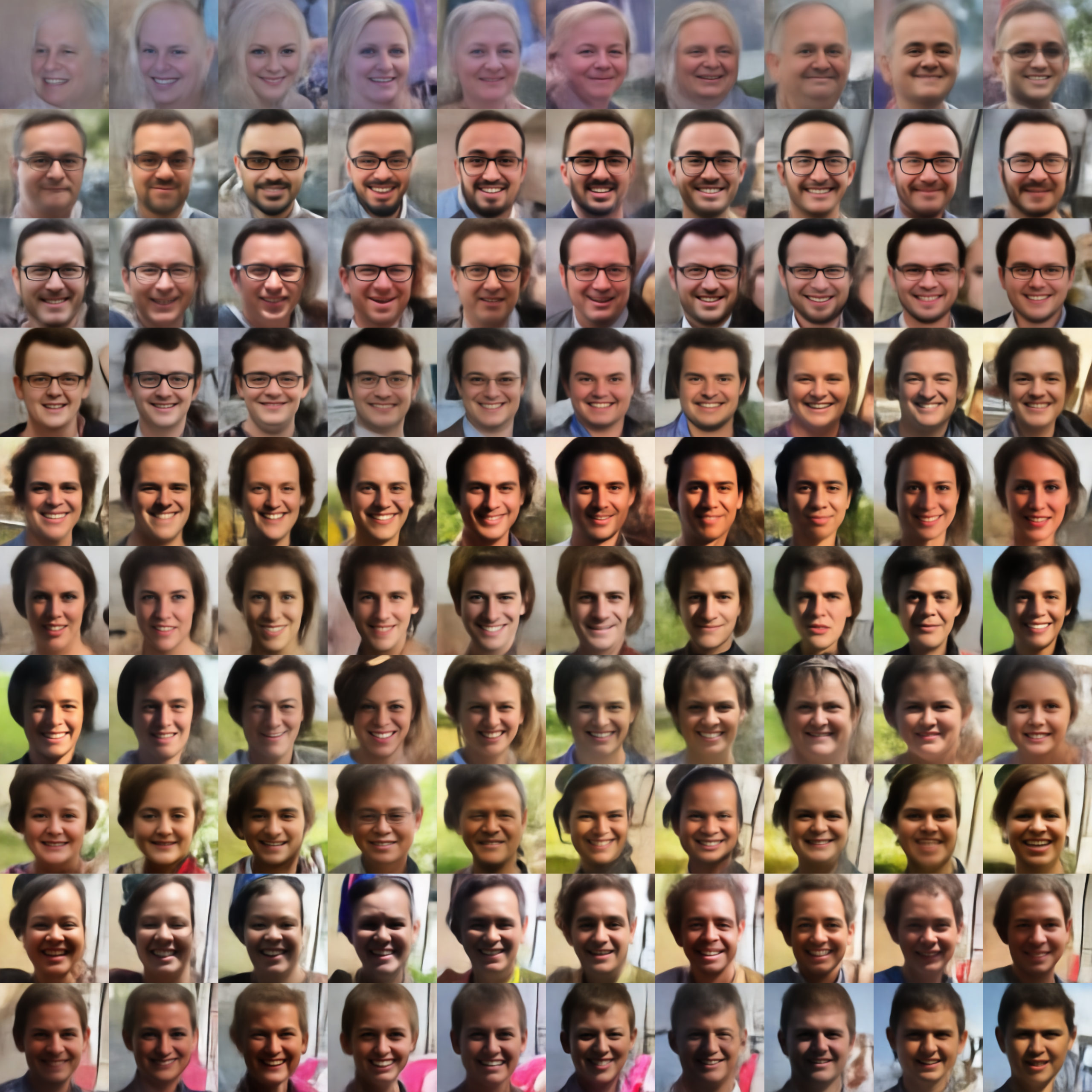}}
\end{subfigure}
\begin{subfigure}[$4\otimes 8$, MDAE]
 {\includegraphics[width=0.49\textwidth]{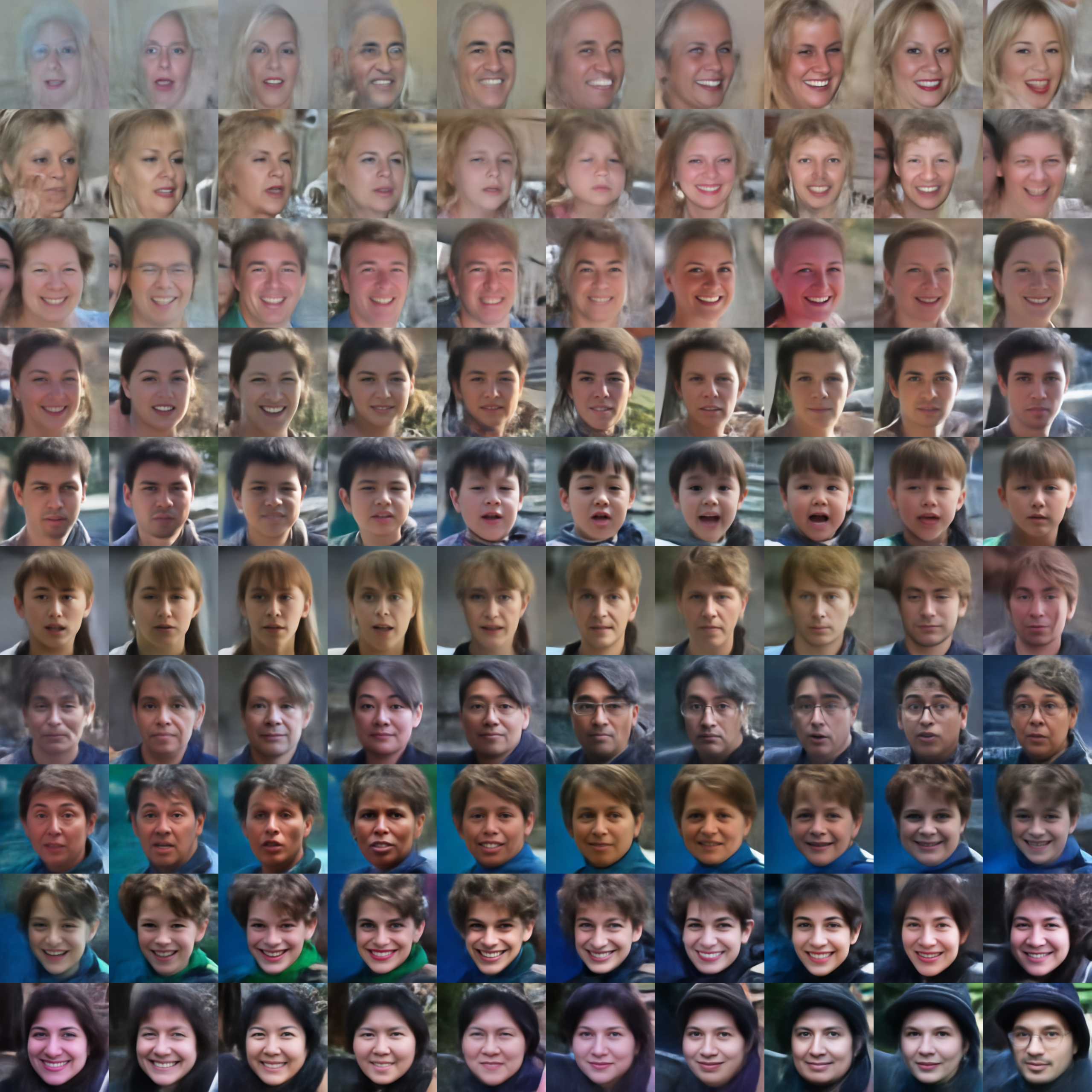}}
\end{subfigure}
\end{center}
\caption{ (\emph{WJS chains on FFHQ-256}) The chains are shown skipping $\Delta k=10$ steps (\emph{no warmup}). We used Algorithm~\ref{alg:wjsI} with $\delta=2, \gamma=\sfrac{1}{2}, u=1$. Transitions are best viewed electronically.} 
\label{fig:ffhq}
 \end{figure}

\clearpage
 
 \begin{figure}[t!] 
\begin{center}
\begin{subfigure}
 {\includegraphics[width=0.15\textwidth]{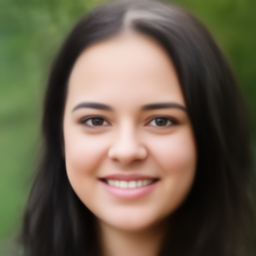}}
\end{subfigure}
\begin{subfigure}
 {\includegraphics[width=0.15\textwidth]{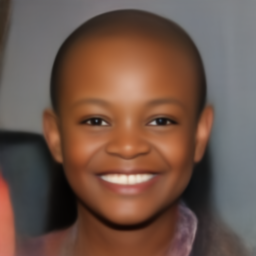}}
\end{subfigure}
\begin{subfigure}
 {\includegraphics[width=0.15\textwidth]{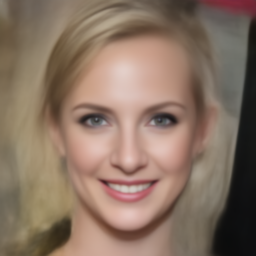}}
\end{subfigure}
\begin{subfigure}
 {\includegraphics[width=0.15\textwidth]{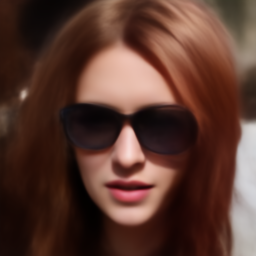}}
\end{subfigure}
\begin{subfigure}
 {\includegraphics[width=0.15\textwidth]{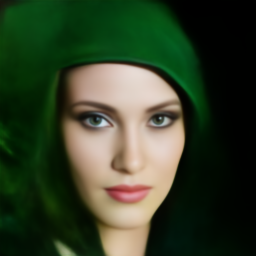}}
\end{subfigure}
\begin{subfigure}
 {\includegraphics[width=0.15\textwidth]{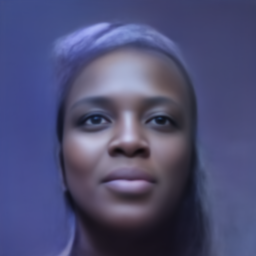}}
\end{subfigure}
\end{center}
\caption{(\emph{$4 \otimes 8$ gallery}) Samples from our FFHQ-256, MDAE, $4 \otimes 8$ model.} 
\label{fig:gallery}
 \end{figure}

\section{Experimental Results} \label{sec:experiments}

For all models with the MDAE and MEM$^2$ parametrizations, we used U$^2$-Net \citep{qin2020u2net} network architecture, a recent variant of UNet \citep{ronneberger2015u}, with a few simple changes (see \autoref{sec:app:experiments}). For the MUVB parametrization for MNIST (evaluated in \autoref{sec:app:muvb-mdae-mnist}), we used Residual networks for the encoder, decoder and metaencoder. Despite the inherent complexity of the task of learning high-dimensional energy models, it is notable that our framework results in a single least-squares objective function, which we optimized using Adam \citep{kingma2014adam} without additional learning rate schedules. Additional details of the experimental setup are in \textbf{\autoref{sec:app:experiments}}.

Arguably, the ultimate test for an energy model is whether one can generate realistic samples from it with \emph{a single fast mixing MCMC chain that explores all modes of the distribution indefinitely}, starting from random noise. We informally refer to them as \textbf{lifelong Markov chains}. To give a physical picture, a gas of molecules that has come in contact with a thermal reservoir does not stop mid air after thermalizing\textemdash arriving at its \say{first sample}\textemdash it continues generating samples from the Boltzmann distribution as long as the physical system exists. To meet this challenge, the energy landscape must not have any pathologies that cause the sampling chain to break or get stuck in certain modes. In addition, we need fast mixing (Langevin) MCMC algorithms, a very active area of research by itself. 

To put this goal in context, in recent energy models for high-dimensional data \citep{xie2016theory, xie2018cooperative, nijkamp2019learning,du2019implicit,zhao2020learning, du2020improved, xie2021learning}, sampling using MCMC quickly breaks or collapses to a mode and chains longer than a few hundred steps were not reported. Thus, evaluation in prior work relies on samples from independent MCMC chains, in addition by using heuristics like \say{replay buffer}~\citep{du2019implicit}. In this work, we report FID scores obtained by single MCMC chains, the first result of its kind, which we consider as a benchmark for future works on long run MCMC chains (see \textbf{Table \ref{app:tab:fid}} for numerical comparisons).

For MNIST, we obtain fast-mixing chains for over \textbf{ 1\,M } steps using MDAE and MUVB. On CIFAR-10 and FFHQ-256, we obtain stable chains {up to 1\,M and 10\,K steps}, respectively, using MDAE. The results are remarkable since energy models\textemdash that learn a \emph{single} energy/score function\textemdash have never been scaled to 256$\times$256 resolution images. The closest results are by~\citet{nijkamp2019learning} and~\citet{du2020improved} on CelebA (128$\times$128)\textemdash in particular, our results in Figs.~\ref{fig:ffhq},~\ref{fig:gallery} can be compared to~\citep[Figs. 1, 2]{nijkamp2019learning}. For CIFAR-10, we report the FID score of \emph{43.95} for $1\otimes 8$ model, which we note is achieved by a \textbf{single MCMC chain} (\autoref{fig:app:cifar-3}); the closest result on FID score obtained for long run MCMC is \emph{78.12} by~\citet{nijkamp2022mcmc} which is \emph{not} on single chains, but on several parallel chains (in that paper the \say{long run} chains are 2\,K steps vs.\ 1\,M steps in this work).

Our detailed experimental results are organized in appendices as follows. \textbf{\autoref{sec:app:primer}} is devoted to an introduction to Langevin MCMC with demonstrations using our FFHQ-256 $4 \otimes 8$ model. \textbf{\autoref{sec:app:mnist}} is devoted to MNIST experiments. At first we compare $\sfrac{1}{4} \otimes 1$ and $1 \otimes 16$ (\autoref{sec:app:more-is-different}). These models are statistically identical regarding the plug-in estimators (\autoref{eq:sigma_eff}) and they arrive at similar training losses, but very different as generative models. This sheds light on the question why we considered such high noise levels  in designing experiments (\autoref{fig:xhat}).   We then present the effect of increasing $M$ for a fixed $\sigma$, the issue of time complexity, and mixing time vs. image quality trade-off (\autoref{sec:app:varyM}). The discussions in \autoref{sec:app:more-is-different} and \autoref{sec:app:varyM} are closely related. We then compare MDAE and MUVB for $1 \otimes 4$ with the message that MDAE generates sharper samples while MUVB has better mixing properties (\autoref{sec:app:muvb-mdae-mnist}). We then present one example of a lifelong Markov chain (\autoref{sec:app:lifelong}). Our MDAE model struggles on the CIFAR-10 challenge due to optimization problems. The results and further discussion are presented in \textbf{\autoref{sec:app:cifar}}. Finally, in \textbf{\autoref{sec:app:ffhq}} we show  chains obtained for FFHQ-256, including the ones that some of the images in \autoref{fig:gallery} were taken from.

\clearpage
 \begin{figure}[t!] 
\begin{center}
\begin{subfigure}
 {\includegraphics[width=0.095\textwidth]{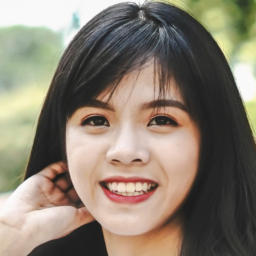}}
\end{subfigure}
\begin{subfigure}
 {\includegraphics[width=0.38\textwidth]{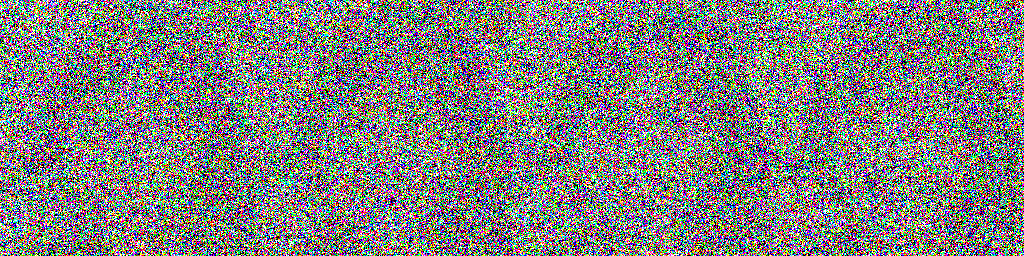}}
\end{subfigure}
\begin{subfigure}
 {\includegraphics[width=0.38\textwidth]{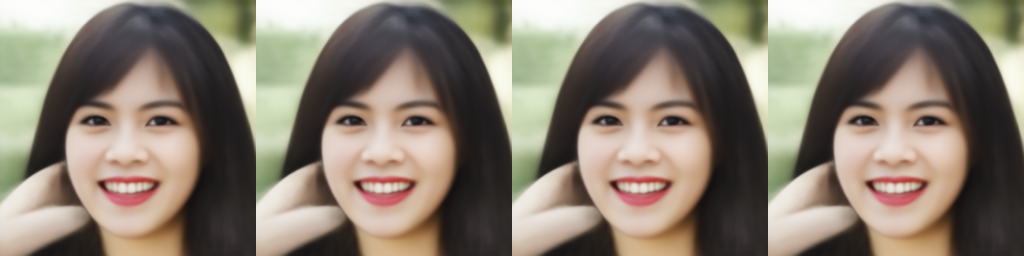}}
\end{subfigure}
\begin{subfigure}
 {\includegraphics[width=0.095\textwidth]{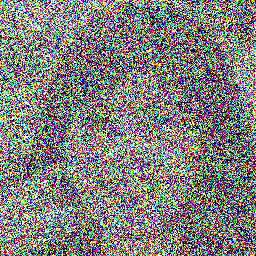}}
\end{subfigure}
\end{center}
\caption{(\emph{single-step multidenoising}) A clean data ($x$), a sample from MNM ($\y$) for the $4 \otimes 4$ model (\autoref{fig:ffhq}a), the outputs of MDAE ($\bnu$), and $\widehat{x}_{\rm mean}(\y)$ are visualized from left to right. The outputs $\bnu_\theta(\y)$ correspond to the Bayes estimator(s) which as expected from theory (\autoref{eq:constraint}) are indistinguishable. } 
\label{fig:xhat}
 \end{figure}

\section{Related Work} \label{sec:related}
Despite being a fundamental framework for denoising, empirical Bayes has been surprisingly absent in the DAE/DSM literature.~\citet{raphan2011least} first mentioned the connection between empirical Bayes and score matching, but this work is practically unknown in the literature; for much of its development, DSM was all tied to DAEs~\citep{vincent2011connection, alain2014regularized}; see~\citep{bengio2013representation} for an extensive survey.  Abstracting DSM away from DAE is due to~\citet{saremi2018deep} where they directly parametrized the energy function.  In this work, we closed the circle, \emph{from empirical Bayes to DAEs}. One can indeed use the MDAE objective  and learn a generative model: \emph{a highly generalized DAE}  naturally arise from empirical Bayes. The important subtlety here is, as we remarked in \autoref{sec:score-matching}, we can only arrive at this connection starting from our generalization of empirical Bayes, not the other way around. Of course, this core multimeasurement aspect of our approach, without which we do not have a generative model, does not exist in the DAE literature.\footnote{There are similarities between Eq. 5 in~\citep{alain2014regularized} and \autoref{eq:g} here. However, one is on barely  noisy data ($\sigma \rightarrow 0$) in $\mathbb{R}^d$, the other on \emph{multimeasurements} in $\mathbb{R}^{Md}$ for any (high) noise level $\sigma$. There, they arrived at Eq. 5 (with some effort) starting from a DAE objective. Here, we start with \autoref{eq:g} (it \emph{defines} the score function) and arrive at MDAE objective (\autoref{eq:mdae}), in one line of algebra.} 

To address the problem of choosing a noise level in DSM~\citep{saremi2018deep}, ~\citet{song2019generative} studied it with multiple noise levels by summing up the losses using a weighing scheme. See~\citep{li2019learning, chen2020wavegrad, song2020improved, kadkhodaie2020solving, jolicoeur2020adversarial} in that direction.  The learning objectives in these models are based on heuristics in how different noise levels are weighted. Denoising diffusion models~\citep{ho2020denoising, song2020score, gao2020learning} follow the same philosophy while being theoretically sound. Sampling in these models are based on  annealing or reversing a diffusion process.   \emph{Our philosophy here is fundamentally different.} Even \say{noise levels} have very different meaning here, associated with the M noise channels of the factorial kernel. All noise  levels (which we took to be \emph{equal} in later parts, a meaningless choice in other methods) are encapsulated in a single energy/score function. Using Langevin MCMC we only sample highly noisy data $\y$. Clean data in $\mathbb{R}^d$ is generated via a single step by computing $\widehat{x}(\y)$.

\section{Conclusion} \label{sec:conclusion}
We approached the problem of generative modeling by mapping a complex density in $\mathbb{R}^d$ to a smoother \say{dual} density, named M-density, in $\mathbb{R}^{Md}$. Permutation invariance is a design choice which we found particularly appealing. Using factorial kernels for density estimation and generative modeling have never been explored before and this work should open up many new avenues. We believe the topic of parametrization will play an especially important role in future developments.  There is a unity in the parametrization schemes we proposed and studied in the paper, but much remains to be explored in understanding and extending the relationship between MEMs and MSMs. 

Our MDAE parametrization scheme (an instance of MSM) is especially appealing for its simplicity and connections to the past literature. DAEs have a very rich history in the field of   representation learning. But research on them as \emph{generative models} stopped around 2015. Our hypothesis is that one must use very large noise to make them work as generative models for complex datasets in high dimensions. Our permutation invariant multimeasurement approach is an elegant solution since it allows for that choice at the cost of computation (large $M$), with only one parameter left to tune: $\sigma$! Crucially, the probabilistic interpretation is ingrained here, given by the algebraic relations between the MDAE output $\Decoder(\y)$, the Bayes estimator $\widehat{x}(\y)$, and the score function $\nabla \log p(\y)$.

\clearpage
\section*{Acknowledgement}
We would like to thank Aapo Hyv\"arinen, Francis Bach, and Faustino Gomez for their valuable comments on the manuscript, and Vojtech Micka for help in running experiments.

\subsubsection*{Ethical Considerations}
By their very nature, generative models assign a high (implicit or explicit) likelihood to points in their training set.
When samples are generated from a trained generative model, it is quite possible that some samples are very similar or identical to certain data points in the training set.
This is important to keep in mind when the data used to train a generative model is confidential or contains personally identifiable information such as pictures of faces in the FFHQ-256 dataset \citep{karras2019style}.
Care should be taken to abide by license terms and ethical principles when using samples from generative models.
For FFHQ-256 in particular, please see terms of use at \url{https://github.com/NVlabs/ffhq-dataset/blob/master/README.md}.
Additionally, we recommend considering the in-depth discussion on this subject by \citet{prabhu2020large} before deploying applications based on generative models.

\subsubsection*{Reproducibility Statement}
All important details of the experimental setup are provided in \autoref{sec:app:experiments}.
Our code is publicly available at \url{https://github.com/nnaisense/mems}.

\bibliography{biblio}
\bibliographystyle{iclr2022}

\clearpage
\appendix

\section{Gaussian M-densities} \label{sec:app:gaussian-mnm}

We study the Bayes estimator for Gaussian MNMs analytically for $X=N(\mu,\sigma_0^2 I_d),$ where $\mu \in \mathbb{R}^d$. As in the example for Poisson MNM in the paper, the first step is to derive an expression for the joint density \< \label{eq:app:py} p(\y) = \int \prod_m p_m(y_m|x) p(x) dx.\> We start with $M=2$. Next we perform the integral above:  
\< \label{eq:2-density}
\begin{split}
  \log p(y_1, y_2) = &- \frac{  \big\Vert y_1 \big\Vert_2^2(\sigma_0^2+\sigma_2^2)+\big\Vert y_2 \big\Vert_2^2(\sigma_0^2+\sigma_1^2) +\big\Vert \mu \big\Vert_2^2(\sigma_1^2+\sigma_2^2) }{2(\sigma_1^2 \sigma_2^2+\sigma_2^2\sigma_0^2+\sigma_0^2\sigma_1^2)}\\
  &+ \frac{2\langle y_1,y_2\rangle \sigma_0^2+ 2\langle \mu,y_2\rangle \sigma_1^2 + 2\langle \mu,y_1\rangle \sigma_2^2}{2(\sigma_1^2 \sigma_2^2+\sigma_2^2\sigma_0^2+\sigma_0^2\sigma_1^2)} -\log Z(\sigma_0,\sigma_1,\sigma_2),
\end{split}  
\>
where $Z$ is the partition function:
$$ Z(\sigma_0,\sigma_1,\sigma_2) = \left((2 \pi)^2 (\sigma_0 \sigma_1 \sigma_2)^2 (\sigma_0^{-2}+\sigma_1^{-2}+\sigma_2^{-2})\right)^{d/2}.$$
The multimeasurement Bayes estimator of $X$ is computed next via \autoref{eq:m-xhat-gaussian}:
\< \label{eq:xhaty1y2} \widehat{x}(y_1,y_2) = \frac{\mu \sigma_1^2 \sigma_2^2+ y_1 \sigma_2^2 \sigma_0^2 +  y_2 \sigma_0^2 \sigma_1^2 }{\sigma_0^2\sigma_1^2+\sigma_0^2\sigma_2^2+\sigma_1^2\sigma_2^2}.\>
As a sanity check, $\widehat{x}(\y)$ is the same whether \autoref{eq:m-xhat-gaussian} is computed using $\arbitrary=1$ or $\arbitrary=2$. Another sanity check is the limit $\sigma_1 \rightarrow 0$, which we recover $\widehat{x}(y_1,y_2)=y_1=x$. The \emph{linear} relation (linear in $\y$ for $\mu=0$) observed in \autoref{eq:xhaty1y2} is a generalization of the well known results for $M=1$, known as \emph{Wiener filter}~\citep{wiener1964extrapolation}.

 A visualization of this result for a mixture of Gaussian is given in \autoref{fig:m-density}.  Next we state the results for general $M$, and after that we derive  the expression for the Bayes estimator $\widehat{x}(\y)$.
 
 It is convenient to define the following notations:
$$ \Sigma_\M := \prod_{m=0}^\M \sigma_m^2 \cdot Z_\M ,~\text{where}~ Z_\M :=  \prod_{m=0}^\M \sigma_m^2 \cdot   \sum_{m=0}^\M\sigma_m^{-2}, $$
$$ \label{eq:yave} \tilde{y} := \sum_{m=0}^\M y_{\M\backslash m},~\text{where}~ y_{\M\backslash \arbitrary} := y_\arbitrary \prod_{l \in [\M]\backslash \arbitrary} \sigma_l^2,~\text{and}~y_0 := \mu.$$
With these notations, the expression for the M-density is given by:
\< \label{eq:general-M}
  \log p(\y) = - \sum_{m=0}^\M\frac{\big\Vert y_\m \big\Vert_2^2}{2\sigma_m^2} + \frac{\big\Vert \tilde{y} \big\Vert_2^2}{2\Sigma_\M}  - d \log \left((2\pi)^{M} \sqrt{ Z_\M}\right). \>
We can now derive $\widehat{x}(\y)$ using \autoref{eq:m-xhat-gaussian}. Given the expression above, the algebra is straightforward:
\<\begin{split} 
	\widehat{x}(\y) &= y_\arbitrary + \sigma_\arbitrary^2 \nabla_\arbitrary \log p(\y) \\
	&= y_\arbitrary - y_\arbitrary + \sigma_\arbitrary^2 (2 \Sigma_M)^{-1} \nabla_\arbitrary \big\Vert \tilde{y} \big\Vert_2^2 \\
	&=  (2 \prod_{l=0}^\M \sigma_l^2 \cdot Z_\M)^{-1} 2 \sigma_\arbitrary^2 \left( \prod_{l\in [M]\backslash\arbitrary} \sigma_l^2 \right) \tilde{y}\\
	&= \tilde{y}/Z_M
\end{split} \>
 \emph{Note that the final expression is the {same} by carrying the algebra for any measurement index $\arbitrary$.}
 

\section{M-density score matching $\equiv$ Multimeasurement DSM} \label{sec:app:score-matching}
Here we provide the score matching formalism~\citep{hyvarinen2005estimation} on M-densities and we arrive at a \emph{multimeasurement denoising score matching} (MDSM) learning objective.  The derivation closely follows~\citep{vincent2011connection} using our notation.  In what follows equalities are modulo additive constants (not functions of parameters $\theta$) which we denote by the color gray:
\begin{equation}
	\begin{split}
		\mathcal{J}(\theta) &= \mathbb{E}_{\y}  \big \Vert -\nabla_\y f_\theta(\y) - \nabla_\y \log p(\y)  \big \Vert_2^2 \\
		 &=  \mathbb{E}_{\y}  \big\Vert \nabla_\y f_\theta(\y)  \big \Vert_2^2 + 2 \cdot \mathbb{E}_{\y} \langle \nabla_\y f_\theta(\y), \nabla_\y \log p(\y) \rangle  {\color{gray} + \mathbb{E}_{\y}  \big\Vert  \nabla_\y \log p(\y)  \big\Vert_2^2 } \\
		 &= \mathbb{E}_{\y}  \big\Vert \nabla_\y f_\theta(\y)  \big\Vert_2^2 + 2 \int p(\y)   \langle \nabla_\y f_\theta(\y), \nabla_\y \log p(\y) \rangle d\y \\
		 &= \mathbb{E}_{\y}  \big\Vert \nabla_\y f_\theta(\y)  \big\Vert_2^2 + 2 \int   \langle \nabla_\y f_\theta(\y), \nabla_\y p(\y) \rangle d\y \\		 
		 &= \mathbb{E}_{\y}  \big\Vert \nabla_\y f_\theta(\y)  \big\Vert_2^2 + 2 \int   \langle \nabla_\y f_\theta(\y), \nabla_\y p(\y|x) \rangle p(x) dx d\y \\
		 &= \mathbb{E}_{\y}  \big\Vert \nabla_\y f_\theta(\y)  \big\Vert_2^2 + 2 \int   \langle \nabla_\y f_\theta(\y), p(\y|x)^{-1} \nabla_\y p(\y|x) \rangle  p(\y|x) p(x) dx d\y \\		 
		 &= \mathbb{E}_{\y}  \big\Vert \nabla_\y f_\theta(\y)  \big\Vert_2^2 + 2 \cdot \mathbb{E}_{(x,\y)}   \langle \nabla_\y f_\theta(\y), \nabla_\y \log p(\y|x) \rangle	 \\
		 &= \mathbb{E}_{(x, \y)}  \big\Vert \nabla_\y f_\theta(\y)  \big\Vert_2^2 + 2 \cdot \mathbb{E}_{(x,\y)}   \langle \nabla_\y f_\theta(\y), \nabla_\y \log p(\y|x) \rangle	  {\color{gray} + \mathbb{E}_{(x,\y)}  \big\Vert \nabla_\y \log p(\y|x)  \big\Vert_2^2 }	\\
		 &=  \mathbb{E}_{(x,\y)}  \big\Vert -\nabla_\y f_\theta(\y) - \nabla_\y \log p(\y|x)   \big\Vert_2^2
	\end{split}
\end{equation}
The M-density score matching learning objective is given by the first identity above and the last equality is the  MDSM objective:
\< \mathcal{J}(\theta) = \mathbb{E}_{(x,\y)}  \big \Vert -\nabla_\y f_\theta(\y) - \nabla_\y \log p(\y|x)   \big \Vert_2^2 \>
For Gaussian MNMs, the MDSM learning objective simplifies to:
\< \label{eq:app:mdsm-1}\mathcal{J}(\theta) = \sum_{m=1}^M \mathcal{J}_m(\theta),
~\text{where}~ 
\mathcal{J}_m = \mathbb{E}_{(x,\y)} \big \Vert -\nabla_m f_\theta(\y) + \frac{y_m - x}{\sigma_m^2} \big \Vert_2^2 ~.\>

\section{MUVB: an unnormalized latent variable model}
\label{sec:app:muvb}
In this section we give a formulation of the energy function parametrization scheme which is named \emph{multimeasurement unnormalized variational Bayes} (MUVB). Conceptually, MUVB is motivated by the goal of connecting empirical Bayes' \emph{denoising} framework with variational Bayes' \emph{inference} machinery. There seem to be fundamental challenges here since one model is based on least-squares estimation, the other based on the principle of maximum likelihood. In what follows, this \say{fundamental conflict} takes the shape of latent variable model that is \emph{unnormalized}. There have been recent sudies along these lines~\citep{saremi2020learning,saremi2020unnormalized} that we expand on; in particular we introduce the novel \emph{metaencoder}. At a high level, the idea here is to set up a latent variable model for M-densities and use the variational free energy as the energy function. We refer to~\citep{jordan1999introduction} for an introduction to variational methods. In what follows, we use the lighter notation for $\sigma \otimes M$ models (\autoref{remark:sigmaxM}) that we also used in \autoref{sec:parametrization}.

In its simplest form, the latent variable model for M-densities is set up by the following choices:
\begin{itemize}
	\item The prior over latent variables which we take to be Gaussian $$ p(z) = \mathcal{N}(z;0,I_{d_z}).$$
	\item The approximate posterior over latent variables $q_\phi(z|\y)$ which is taken to be the factorized Gaussian, a standard choice in the literature.
	\item For the conditional density $p_{\eta}(\y|z)$ we can consider the following $$ p_{\eta}(\y|z) = \mathcal{N}(\y;\bnu_\eta(z), \sigma^2 I_{Md}), $$
	which may seem as a especially \say{natural} choice since samples from the M-density are obtained by adding multimeasurement noise to clean samples.
\end{itemize}
In the variational autoencoder parlance, $\phi$ are the \emph{encoder}'s parameters, $\eta$ the \emph{decoder}'s parameters, and $\bnu=(\nu_1,\dots,\nu_M)$ the decoder's outputs. With this setup, the variational free energy is easily derived which we take to be the energy function as follows ($\theta=(\phi,\eta)$):
$$ f_\theta(\y) = \frac{1}{2\sigma^2} \mathbb{E}_{q_\phi(z|\y)} \big\Vert \y - \bnu_\eta(z) \big\Vert_2^2 + \text{KL}(q_\phi(z|\y)\| p(z)) ,$$ %
where the KL divergence is derived in closed form, and the expectation is computed by sampling $z \sim q_\phi(z|\y)$ via the reparametrization trick~\citep{kingma2013auto}. 

In this machinery, in the inference step we are forced to have  normalized (and tractable) posteriors since we must take samples from $q_\phi(z|\y)$. What about the conditional density $p(\y|z)$? If we were to formalize a VAE for modeling M-densities we had to have a \emph{normalized} conditional density $p(\y|z)$ since that framework is based on the principle of maximum \emph{likelihood}. What is intriguing about our setup is that since our goal is to learn the unnormalized $p(\y)$, we can consider conditional densities $p(\y|z)$ that are \emph{unnormalized}. An intuitive choice is the following that we name \emph{metalikelihood}:
$$ \label{eq:meta} p_{(\eta, \zeta)}(\y|z) \propto   \exp\left(-\frac{1}{2\sigma^2}\Vert \y - \Decoder_{\eta}(z) \big \Vert_2^2 -h_\zeta(\y,\Decoder_{\eta}(z))\right).$$
Now, the energy function takes the following form ($\theta=(\phi,\eta,\zeta)$):
\< \label{eq:app:meta} f_\theta(\y) = \mathbb{E}_{q_\phi(z|\y)} \left(\frac{1}{2\sigma^2} \big\Vert \y - \bnu_\eta(z) \big\Vert_2^2 + h_\zeta(\y,\Decoder_{\eta}(z)\right) + \text{KL}(q_\phi(z|\y)\| p(z)).\>
This is the final form of MUVB energy function. We refer to $h_\zeta$ by metaencoder. Its input is $(\y,\nu(z;\eta))$ (note that $z$ itself is a function of $\y$ via inference), and one can simply use any encoder architecture to parametrize it. In our implementation, we used the standard VAE pipeline and we reused the encoder architecture to parametrize the metaencoder.

\section{Two Walk-Jump Sampling Algorithms}

\label{sec:app:wjs}

In this section we provide the walk-jump sampling (WJS) algorithm based on different discretizations of underdamped Langevin diffusion (\autoref{eq:diffusion}). The first is due to~\citet{sachs2017langevin} which has also been used by~\citet{arbel2020generalized} for generative modeling. The second is based on a recent algorithm analyzed by~\citet{cheng2018underdamped} which we implemented and give the detailed algorithm here. In addition, we extended their calculation to general friction (they set $\gamma=2$ in the paper) and provide an extension of the \citep[Lemma 11]{cheng2018underdamped} for completeness (the proof closely follows the calculation in the reference). In both cases we provide the algorithm for $\sigma \otimes M$ models.  

\subsection{Walk-Jump Sampling Algorithm I}

\begin{algorithm}[H]
\caption{Walk-jump sampling~\citep{saremi2019neural} using the discretization of Langevin diffusion by~\citet{sachs2017langevin}. 
 The for loop corresponds to the dashed arrows (\emph{walk}) in \autoref{fig:wjs} and line 14 ($\langle \cdot \rangle$ is computed over the measurement indices $m$) to the solid arrow (\emph{jump}). }
  \begin{algorithmic}[1]
    \STATE \textbf{Input}  $\delta$ (step size), $u$ (inverse mass), $\gamma$ (friction), $K$ (steps taken)  
    \STATE \textbf{Input} Learned energy function $\energy(\y)$ or {\alcolor score function $\g(\y)\approx \nabla \log p(\y)$}
    \STATE \textbf{Ouput} $\widehat{X}_{K}$
    \STATE $\Y_0\sim \text{Unif}([0,1]^{M d})$
    \STATE $\mathbf{V}_0 \leftarrow 0$    
    \FOR{$k=[0,\dots, K)$}
      \STATE $\Y_{k + 1}  \leftarrow \Y_k + \delta\mathbf{V}_{k}/2$
      \STATE $\mathbf{\Psi}_{k+1}   \leftarrow  - \nabla_\y \energy(\Y_{k+1})  $ or $\alcolor \mathbf{\Psi}_{k+1}   \leftarrow  \g(\Y_{k+1})  $
      \STATE $\mathbf{V}_{k +1} \leftarrow \mathbf{V}_k  + u\delta\mathbf{\Psi}_{k+1} /2 $
      \STATE $\mathbf{B}_{k+1}\sim N(0,I_{\M d}) $ 
      \STATE $ \mathbf{V}_{k+1} \leftarrow \exp(- \gamma \delta) \mathbf{V}_{k+1} + u\delta\mathbf{\Psi}_{k+1}/2 + \sqrt{u \left(1-\exp(-2 \gamma \delta )\right)} \mathbf{B}_{k+1}$
      \STATE $ \Y_{k+1}\leftarrow \Y_{k+1} + \delta \mathbf{V}_{k+1}/2$
    \ENDFOR
    \STATE $\widehat{X}_{K}  \leftarrow \langle Y_{K,m} - \sigma^2 \nabla_{m}\energy(\Y_{K}) \rangle$ or $ \alcolor \widehat{X}_{K} \leftarrow \langle Y_{K,m} + \sigma^2 g_{m}(\Y_{K}) \rangle$
  \end{algorithmic}
  \label{alg:wjsI}
\end{algorithm}

\subsection{Walk-Jump Sampling Algorithm II}
Before stating the algorithm, we extend the calculation in~\citet[Lemma 11]{cheng2018underdamped} to general friction, which we used in our implementation of their Langevin MCMC algorithm. The algorithm is stated following the proof of the lemma which closely follows~\citet{cheng2018underdamped}. 

The starting point is solving the diffusion \autoref{eq:diffusion}. With the initial condition $(\y_0, \vv_0)$, the solution $(\y_t, \vv_t)$ to the discrete  underdamped Langevin diffusion  is ($t$ is considered small here) 
\<
\begin{split} \label{eq:app:diffusion}
	 \vv_t &= \vv_0 e^{-\gamma t} - u \left( \int_0^t e^{-\gamma(t-s)} \nabla f(\y_0) ds \right)+ \sqrt{2 \gamma u} \int_0^t e^{-\gamma(t-s)} d\B_s, \\
	 \y_t &= \y_0 + \int_0^t \vv_s ds,
\end{split}
\>
These equations above are then used in the proof of the following lemma. 
\begin{lemma}  \label{lemma:langevin} Conditioned on $(\y_0,\vv_0)$, the solution of underdamped Langevin diffusion (\autoref{eq:diffusion}) integrated up to time $t$ is a Gaussian with conditional means
\< 
\begin{split}
	\expectation[\vv_t] &= \vv_0 e^{-\gamma t}- \gamma^{-1} u (1-e^{-\gamma t}) \nabla f(\y_0)\\
	\expectation[\y_t] &= \y_0 + \gamma^{-1}(1-e^{-\gamma t}) \vv_0 - \gamma^{-1} u \left(t-\gamma^{-1}(1-e^{-\gamma t})\right) \nabla f(\y_0) ,
\end{split}
\>
and with conditional covariances
\<
\begin{split}
\expectation[(\y_t - \expectation[\y_t]) (\y_t - \expectation[\y_t])^\top ] &= \gamma^{-1} u \left( 2t - 4\gamma^{-1} (1-e^{-\gamma t}) + \gamma^{-1} (1-e^{-2\gamma t}) \right)  \cdot I_{Md} ,\\
\expectation[(\vv_t - \expectation[\vv_t]) (\vv_t - \expectation[\vv_t])^\top ] &= u(1-e^{-2\gamma t}) \cdot I_{Md},\\
\expectation[(\y_t - \expectation[\y_t]) (\vv_t - \expectation[\vv_t])^\top ] &=  \gamma^{-1} u  (1-2 e^{-\gamma t} + e^{-2\gamma t}) \cdot I_{Md}.
\end{split}
\>
\end{lemma}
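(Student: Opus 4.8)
The plan is to exploit the fact that the frozen-drift diffusion in \autoref{eq:app:diffusion} is \emph{linear}: with the gradient $\nabla f(\y_0)$ held fixed at the initial point, both $\vv_t$ and $\y_t$ are affine functionals of the single Gaussian process $\int_0^{\cdot} e^{-\gamma(\cdot-s)}\,d\B_s$. Since an affine image of a Gaussian is Gaussian, the conditional law of $(\y_t,\vv_t)$ given $(\y_0,\vv_0)$ is automatically Gaussian, and it suffices to match the first two moments. So the proof reduces to four moment computations.

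First I would compute the means. Taking $\expectation[\cdot]$ in \autoref{eq:app:diffusion} annihilates the Itô integral (zero mean), leaving only the deterministic drift; using $\int_0^t e^{-\gamma(t-s)}\,ds = \gamma^{-1}(1-e^{-\gamma t})$ yields $\expectation[\vv_t]$ at once. For $\expectation[\y_t]$ I would interchange expectation with the $ds$-integral in $\y_t = \y_0 + \int_0^t \vv_s\,ds$, substitute the just-derived formula for $\expectation[\vv_s]$, and integrate the two resulting exponential terms; the drift term contributes $u\gamma^{-1}\left(t - \gamma^{-1}(1-e^{-\gamma t})\right)$, matching the stated mean.

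Next, for the covariances only the stochastic part contributes, since the drift is deterministic. The velocity fluctuation is $\vv_t - \expectation[\vv_t] = \sqrt{2\gamma u}\int_0^t e^{-\gamma(t-s)}\,d\B_s$, so Itô isometry gives its covariance as $2\gamma u\left(\int_0^t e^{-2\gamma(t-s)}\,ds\right) I_{Md} = u(1-e^{-2\gamma t})\,I_{Md}$. The crucial manipulation is for the position: I would write $\y_t - \expectation[\y_t] = \int_0^t (\vv_s - \expectation[\vv_s])\,ds$ as an iterated integral and apply the \emph{stochastic Fubini theorem} to exchange $\int_0^t ds$ with $\int_0^s d\B_r$, producing the single stochastic integral $\sqrt{2\gamma u}\int_0^t \gamma^{-1}\bigl(1 - e^{-\gamma(t-r)}\bigr)\,d\B_r$. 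Itô isometry then reduces both the position covariance and the position--velocity cross covariance to elementary integrals of $(1-e^{-\gamma\tau})^2$ and $(1-e^{-\gamma\tau})e^{-\gamma\tau}$ over $[0,t]$ (after the substitution $\tau = t-r$), which integrate to the claimed expressions.

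I expect the main obstacle to be bookkeeping rather than anything conceptual: the exchange of integration order must be justified by the stochastic Fubini theorem (valid here because the kernel $e^{-\gamma(t-s)}$ is bounded and square-integrable on $[0,t]$), and the subsequent exponential integrals must be tracked precisely so that the powers of $\gamma^{-1}$ and the signs in $2t - 4\gamma^{-1}(1-e^{-\gamma t}) + \gamma^{-1}(1-e^{-2\gamma t})$ emerge correctly. Once the single-integral representation of $\y_t - \expectation[\y_t]$ is in hand, each covariance is a one-line application of Itô isometry, and the entire result follows the calculation of \citet{cheng2018underdamped} with $\gamma$ kept general rather than set to $2$.
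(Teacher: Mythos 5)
Your proposal is correct and follows essentially the same route as the paper: freeze the drift at $\nabla f(\y_0)$ so that $(\y_t,\vv_t)$ is an affine functional of the Brownian motion (hence Gaussian), read off the means by dropping the zero-mean It\^{o} integral, and obtain all three covariances by exchanging the order of integration to represent $\y_t-\expectation[\y_t]$ as the single stochastic integral $\sqrt{2\gamma^{-1}u}\int_0^t\bigl(1-e^{-\gamma(t-s)}\bigr)\,d\B_s$ followed by the It\^{o} isometry. No substantive differences from the paper's argument.
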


\begin{proof}
	Computation of conditional means is straightforward as the Brownian motion has zero means:
\< \nonumber 
\begin{split}
	\expectation[\vv_t] &= \vv_0 e^{-\gamma t}- \gamma^{-1} u (1-e^{-\gamma t}) \nabla f(\y_0)\\
	\expectation[\y_t] &= \y_0 + \gamma^{-1}(1-e^{-\gamma t}) \vv_0 - \gamma^{-1} u \left(t-\gamma^{-1}(1-e^{-\gamma t})\right) \nabla f(\y_0) ,
\end{split}
\>
All the conditional covariances only involve the Brownian motion terms. We start with the simplest:
\< \label{eq:app:vv}
\begin{split}
	\expectation[(\vv_t - \expectation[\vv_t]) (\vv_t - \expectation[\vv_t])^\top ] &= 2\gamma u \expectation \left[ \left( \int_0^t e^{-\gamma(t-s)} d\B_s \right) \left( \int_0^t e^{-\gamma(t-s)} d\B_s \right)^\top \right] \\
	&= 2\gamma u \left( \int_0^t e^{-2\gamma(t-s)} ds \right)\cdot I_{Md} \\
	&= u(1-e^{-2\gamma t}) \cdot I_{Md}
\end{split}
\>
From \autoref{eq:app:diffusion}, the Brownian motion term for $\y_t$ is given by
\< \nonumber
\begin{split}
\sqrt{2\gamma u} \int_0^t \left( \int_0^r e^{-\gamma(t-s)} d\B_s \right) dr &= \sqrt{2\gamma u} \int_0^t e^{\gamma s} \left( \int_s^t e^{-\gamma r} dr\right) d\B_s\\
&= \sqrt{2\gamma^{-1} u} \int_0^t \left( 1- e^{-\gamma(t-s)} \right)  d\B_s.
\end{split}
\>
The conditional covariance for $\y_t$ follows similar to \autoref{eq:app:vv}:
\<
\begin{split}
	\expectation[(\y_t - \expectation[\y_t]) (\y_t - \expectation[\y_t])^\top ] &= 2\gamma^{-1} u  \left( \int_0^t \left( 1- e^{-\gamma(t-s)} \right)^2 ds \right)\cdot I_{Md}  \\
	&= \gamma^{-1} u \left( 2t - 4\gamma^{-1} (1-e^{-\gamma t}) + \gamma^{-1} (1-e^{-2\gamma t}) \right)  \cdot I_{Md} 
\end{split}
\>
Finally the conditional covariance between $\y_t$ and $\vv_t$ is given by
\< \nonumber 
\begin{split}
	\expectation[(\y_t - \expectation[\y_t]) (\vv_t - \expectation[\vv_t])^\top ] &= 2u \expectation \left[ \left( \int_0^t \left( 1- e^{-\gamma(t-s)} \right)  d\B_s \right) \left( \int_0^t e^{-\gamma(t-s)} d\B_s \right)^\top \right] \\
	&= 2u \left( \int_0^t \left( 1- e^{-\gamma(t-s)}\right) e^{-\gamma(t-s)} ds  \right)\cdot I_{Md} \\
	&= \gamma^{-1} u (1-2 e^{-\gamma t} + e^{-2\gamma t}) \cdot I_{Md}  
\end{split}
\>

\end{proof}

 
\begin{algorithm}[H]
\caption{Walk-Jump Sampling (WJS) using the discretization of Langevin diffusion from \autoref{lemma:langevin}. Below, we provide the Cholesky decomposition of the conditional covariance matrix.}
  \begin{algorithmic}[1]
    \STATE \textbf{Input}  $\delta$ (step size), $u$ (inverse mass), $\gamma$ (friction), $K$ (steps taken)  
    \STATE \textbf{Input} Learned energy function $\energy(\y)$ or {\alcolor score function $\g(\y)\approx \nabla \log p(\y)$}
    \STATE \textbf{Ouput} $\widehat{X}_K$
    \STATE $\Y_0\sim \text{Unif}([0,1]^{M d})$
    \STATE $\mathbf{V}_0 \leftarrow 0$    
    \FOR{$k=[0,\dots, K)$}
      \STATE $\mathbf{\Psi}_{k}   \leftarrow  - \nabla_\y \energy(\Y_{k})  $ or $\alcolor \mathbf{\Psi}_{k}   \leftarrow  \g(\Y_{k})  $
      \STATE $\mathbf{V}_{k +1} \leftarrow  \mathbf{V}_k e^{-\gamma \delta} + \gamma^{-1} u (1-e^{-\gamma \delta}) \mathbf{\Psi}_{k}  $      
      \STATE $\Y_{k + 1} \leftarrow \Y_k + \gamma^{-1}(1-e^{-\gamma \delta}) \mathbf{V}_k + \gamma^{-1} u \left(\delta-\gamma^{-1}(1-e^{-\gamma \delta})\right) \mathbf{\Psi}_{k}   $ 
      \STATE $\mathbf{B}_{k+1}\sim N(0,I_{2 \M d}) $ 
      \STATE $\begin{pmatrix}
      	\Y_{k + 1} \\ \mathbf{V}_{k + 1}  
      \end{pmatrix} \leftarrow \begin{pmatrix}
      	\Y_{k + 1} \\ \mathbf{V}_{k + 1}  
      \end{pmatrix} + \LL \mathbf{B}_{k+1}$\hspace{0.5cm} \tcp{see Eq.~\ref{eq:app:cholesky}}
    \ENDFOR
    \STATE $\widehat{X}_K  \leftarrow \langle Y_{K,m} - \sigma^2 \nabla_{m}\energy(\Y_K) \rangle$ or $ \alcolor \widehat{X}_K  \leftarrow \langle Y_{K,m} + \sigma^2 g_{m}(\Y_K) \rangle$
  \end{algorithmic}
  \label{alg:wjsII}
\end{algorithm}

The Cholesky decomposition $\bSigma = \LL \LL^\top$ for the conditional covariance in \autoref{lemma:langevin} is given by:

\< \label{eq:app:cholesky} \LL = \begin{pmatrix} 
	\Sigma_{\y\y}^{1/2}   \cdot I_{Md} & 0 \\
	\Sigma_{\y\y}^{-1/2} \Sigma_{\y \vv} \cdot I_{Md} & (\Sigma_{\vv\vv}- \Sigma_{\y\vv}^2/\Sigma_{\y\y})^{1/2} \cdot I_{Md}\\
\end{pmatrix}, \> 
where  
\begin{eqnarray}
	\Sigma_{\y\y} &=& \gamma^{-1} u \left( 2\delta - 4\gamma^{-1} (1-e^{-\gamma \delta}) + \gamma^{-1} (1-e^{-2\gamma \delta}) \right) \\
	\Sigma_{\vv\vv} &=& u(1-e^{-2\gamma \delta})\\
	\Sigma_{\y\vv} &=& \gamma^{-1} u  (1-2 e^{-\gamma \delta} + e^{-2\gamma \delta}) 
\end{eqnarray}

\begin{paragraph}{Algorithm 1 vs. Algorithm 2}
The two algorithms presented in this section have very different properties in our high dimensional experiments. In general we found Algorithm 1 (A1) to behave similarly to overdamped version (see~\autoref{sec:app:primer} for some illustrations) albeit with much faster mixing. We found {friction} $\gamma$ to play a very different role in comparing the two algorithms, also apparent in comparing the mathematical expressions in both algorithms: in A1, the friction \emph{only} appears in the form $\exp(-\gamma \delta)$; it is more complex in A2. Similar discrepancy is found regarding $u$.
\end{paragraph}

\begin{remark}[Initialization scheme] \label{remark:app:init} The initialization $\Y_0 \sim \text{Unif}([0,1]^{Md})$ used in the algorithms is the conventional choice, but for very large $\sigma$ we found it effective to initialize the chain by further adding the Gaussian noise in $\mathbb{R}^{Md}$:
$$ \Y_0 = \boldsymbol{\epsilon}_0 + \boldsymbol{\varepsilon}_0,\text{ where }\boldsymbol{\epsilon}_0\sim \text{Unif}([0,1]^{Md}), \boldsymbol{\varepsilon}_0 \sim N(0,\sigma^2 I_d) $$
Otherwise, with large step sizes $\delta=O(\sigma)$ the chains starting
with the uniform distribution could break early on. We found this scheme reliable, and it is well motivated since M-density $p(\y)$ is obtained by convolving $p(x)$ with the Gaussian MNM. With this initialization one starts \say{relatively close} to the M-density manifold and this is more pronounced for larger $\sigma$. However, this initialization scheme is based on heuristics (as is the conventional one).
\end{remark}

\begin{remark}[The jump in WJS is asynchronous] \label{remark:app:async} The jump in WJS is asynchronous (\autoref{fig:wjs}) and it can also be taken inside the for loop  in the algorithms provided without affecting the Langevin chain. 
\end{remark}

\begin{remark}[The $\Delta k$ notation] \label{remark:app:deltak}
	In running long chains we set $K$ to be large, and we report the jumps (taken inside the for loops in the algorithms provided) with a certain fixed frequency denoted by $\Delta k$.
\end{remark}

\begin{remark}[WJS for Poisson M-densities]\label{remark:app:poisson-wjs} WJS is a general sampling algorithm, schematized in \autoref{fig:wjs}. However, for Poisson MNMs it will involve sampling a discrete distribution in $\mathbb{N}^{Md}$ which we do not know how to do efficiently in high dimensions (note that Langevin MCMC cannot be used since the score function is not well defined for discrete distributions). We can use Metropolis-Hastings algorithm and its popular variant Gibbs sampling but these algorithms are very slow in high dimensions since at each iteration of the sampling algorithm in principle only 1 dimension out of Md shall be updated, so in the general case the mixing time is at best of $O(Md)$; one can use \say{blocking} techniques but they are problem dependent and typically complex~\citep{andrieu2003introduction}.
\end{remark}

\clearpage

\section{Experimental Setup}
\label{sec:app:experiments}

\paragraph{Datasets}
Our experiments were conducted on the MNIST \citep{lecun1998gradient}, CIFAR-10 \citep{krizhevsky2009} and FFHQ-256 (\citealp{karras2019style}; pre-processed version by \citealp{child2020very}) datasets.
The CIFAR-10 and FFHQ-256 training sets were augmented with random horizontal flips during training.

\paragraph{Network Architectures}
For all experiments with the MDAE and MEM$^2$ parametrization, the U$^2$Net network architecture \citep{qin2020u2net} was chosen since it is a recent and effective variant of the UNet \citep{ronneberger2015u} that has been successfully applied to various pixel-level prediction problems.
However, we have not experimented with alternatives.
Minor adjustments were made to the U$^2$Net from the original paper: we removed all normalization layers, and switched the activation function to $x \mapsto x \cdot \text{sigmoid}(x)$ \citep{elfwing2017sigmoid,ramachandran2017swish}.
To approximately control the variance of inputs to the network in each dimension, even if the noise level $\sigma$ is large, the noisy input values were divided by a scaling factor $\sqrt{0.225^2 + \sigma^2}$.
We also adjusted the number of ``stages'' in the network, stage heights and layer widths (see \autoref{tab:exp-u2net} for details), in order to fit reasonable batch sizes on available GPU hardware and keep experiment durations limited to a few days (see \autoref{tab:exp-hypers}).

For experiments with the MUVB parametrization (the MNIST model in \autoref{sec:app:muvb-mdae-mnist}), we used Residual convolutional networks utilizing a bottleneck block design with skip connections and average pooling layers \citep{lin2013network,srivastava2015training,he2016deep}.
The architecture of the encoder and metaencoder were kept the same, while the decoder used the encoder architecture in reverse, with the pooling layers substituted with nearest neighbour upsampling.
See \autoref{tab:exp-resnet-arch} for a detailed description of the encoder architecture.

\begin{remark} \label{remark:fragile}
 Strictly speaking, for $\sigma \otimes M$ models the energy function is permutation invariant and $\y$ should be treated as $\{y_1,\dots,y_M\}$, a set consisting of $M$ Euclidean vectors. This is however difficult to enforce while keeping the model \emph{expressive}. We note that in our model the permutation symmetry is a \say{fragile symmetry} which one can easily break by making $\sigma_m$ just slightly different from each other. Therefore, we designed the  architecture to be used for the general setting of $\sigma_m$ different from each other, and in experiments we just set them to be equal.
\end{remark}

\paragraph{Training}
All models were trained using the Adam \citep{kingma2014adam} optimizer with the default setting of $\beta_1=0.9, \beta_2=0.999$ and $\epsilon=10^{-8}$. \autoref{tab:exp-hypers} lists the main hyperparameters used and hardware requirements for training MDAE models for each dataset.
The resulting training curves are shown in \autoref{fig:loss-curves}, showing the stability of the optimization with a relatively simple setup.

\paragraph{Software}
All models were implemented in the CPython (v3.8) library PyTorch v1.9.0~\citep{paszke2017automatic} using the PyTorch Lightning framework v1.4.6~\citep{falcon2019pytorch}.

\begin{table}[b!]
\caption{U$^2$Net architecture used for MDAE and MEM$^2$ parametrizations for all datasets. All layer widths (number of channels) were expanded by a width factor for CIFAR-10 and FFHQ-256.}\label{tab:exp-u2net}
\begin{center}
\begin{tabular}{@{}lrrrrr@{}}
\toprule
Stage     & Height & In channels & Mid channels & Out channels & Side \\ \midrule
Encoder 1 & 6      & 3 or 1      & 64           & 64           &      \\
Encoder 2 & 5      & 64          & 128          & 128          &      \\
Encoder 3 & 4      & 128         & 128          & 256          &      \\
Encoder 4 & 3      & 256         & 256          & 512          &      \\
Encoder 5 & 3      & 512         & 256          & 512          & 512  \\
Decoder 4 & 3      & 1024        & 128          & 256          & 256  \\
Decoder 3 & 3      & 512         & 128          & 128          & 128  \\
Decoder 2 & 5      & 256         & 128          & 64           & 64   \\
Decoder 1 & 6      & 128         & 64           & 64           & 64   \\ \bottomrule
\end{tabular}
\end{center}
\end{table}

\begin{table}[b!]
\caption{Main hyperparameters and computational resources used for training MDAE models.}\label{tab:exp-hypers}
\begin{center}
\small
\begin{tabular}{@{}llll@{}}
\toprule
                    & MNIST $1 \otimes 16$  & CIFAR-10 $1 \otimes 8$   & FFHQ-256 $4 \otimes 8$        \\ \midrule
Width factor        & 1                     & 2                        & 2               \\
Learning rate       & 0.00002               & 0.00002                  & 0.00002         \\
Total batch size    & 128                   & 256                      & 16              \\
GPUs                & 1$\times$GTX Titan X  & 4$\times$GTX Titan X     & 4$\times$V100   \\
Run Time            & $\approx$2.3 days (1000 epochs)    & $\approx$2 days (1000 epochs)         & $\approx$4.5 days (250 epochs) \\ \bottomrule
\end{tabular}
\end{center}
\end{table}

\begin{table}[t]
\caption{
    MUVB Encoder architecture for MNIST.
    Each row indicates a sequence of transformations in the first column, and the spatial resolution of the resulting output in the second column.
    \textbf{Block\{N\}$\times$\{D\}} denotes a sequence of D blocks, each consisting of four convolutional layers with a skip connection using a ``bottleneck'' design: the layers have widths $[\text{N}, 0.25*\text{N}, 0.25*\text{N}, \text{N}]$ and kernel sizes $[1, 3, 3, 1]$, except when the input resolution becomes smaller than $3$, in which case all the kernel sizes are $1$.
}\label{tab:exp-resnet-arch}
\begin{center}
\begin{tabular}{@{}cc@{}}
\toprule
Module               & Output resolution \\
\midrule
Input                & 28$\times$28\\
Block128$\times$2    & 28$\times$28\\
AveragePool          & 14$\times$14\\
Block256$\times$2    & 14$\times$14\\
AveragePool          & 7$\times$7\\
Block512$\times$2    & 7$\times$7\\
AveragePool          & 4$\times$4\\
Block1024$\times$2   & 4$\times$4\\
AveragePool          & 2$\times$2\\
Block1024$\times$2   & 2$\times$2\\
AveragePool          & 1$\times$1\\
Block1024$\times$1   & 1$\times$1\\
\bottomrule
\end{tabular}
\end{center}
\end{table}

\begin{figure}[]
     \centering
     \begin{subfigure}[MNIST $1 \otimes 16$]
         {\includegraphics[width=0.32\textwidth]{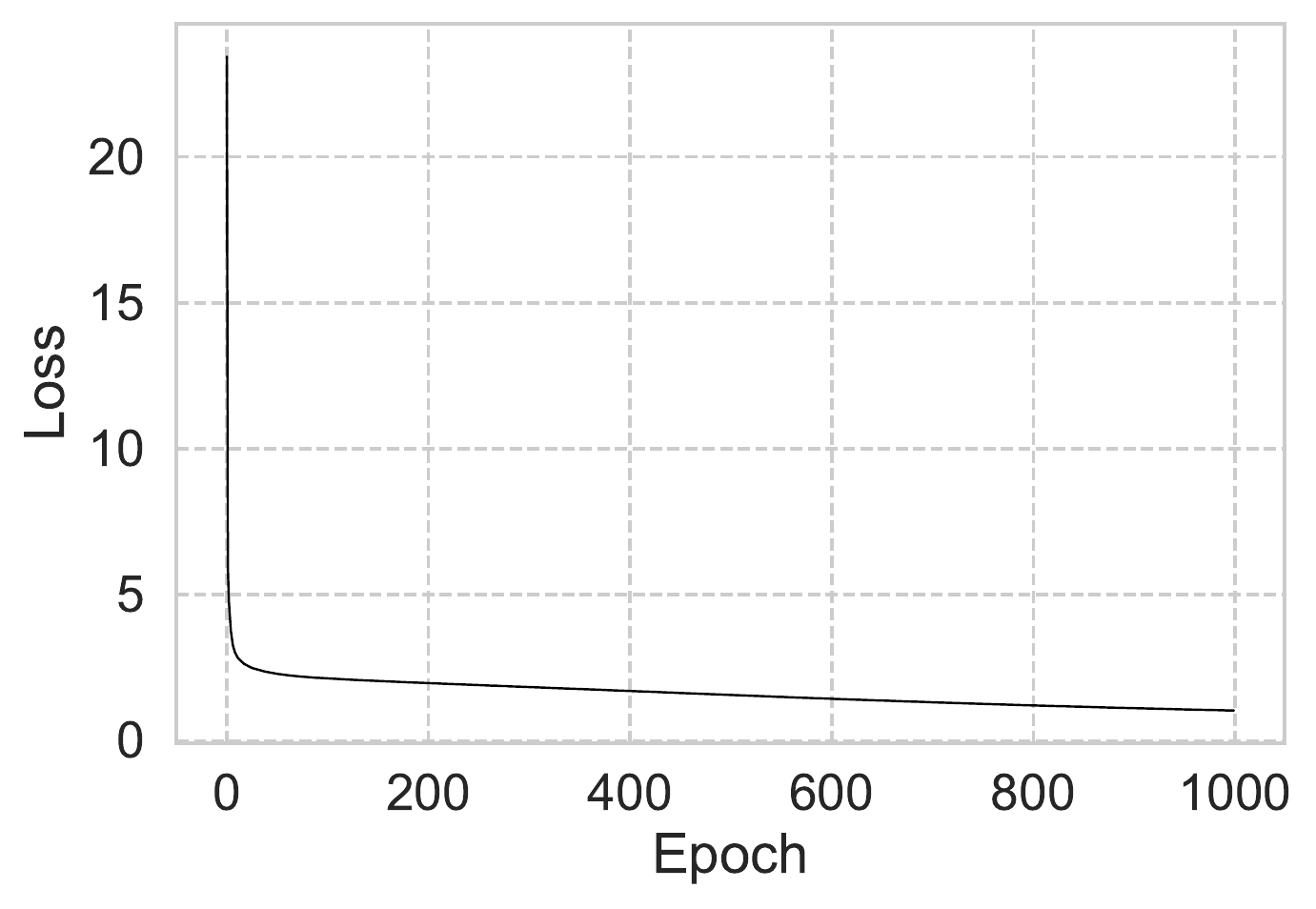}}
     \end{subfigure}
     \hfill
     \begin{subfigure}[CIFAR-10 $1 \otimes 8$]
         {\includegraphics[width=0.32\textwidth]{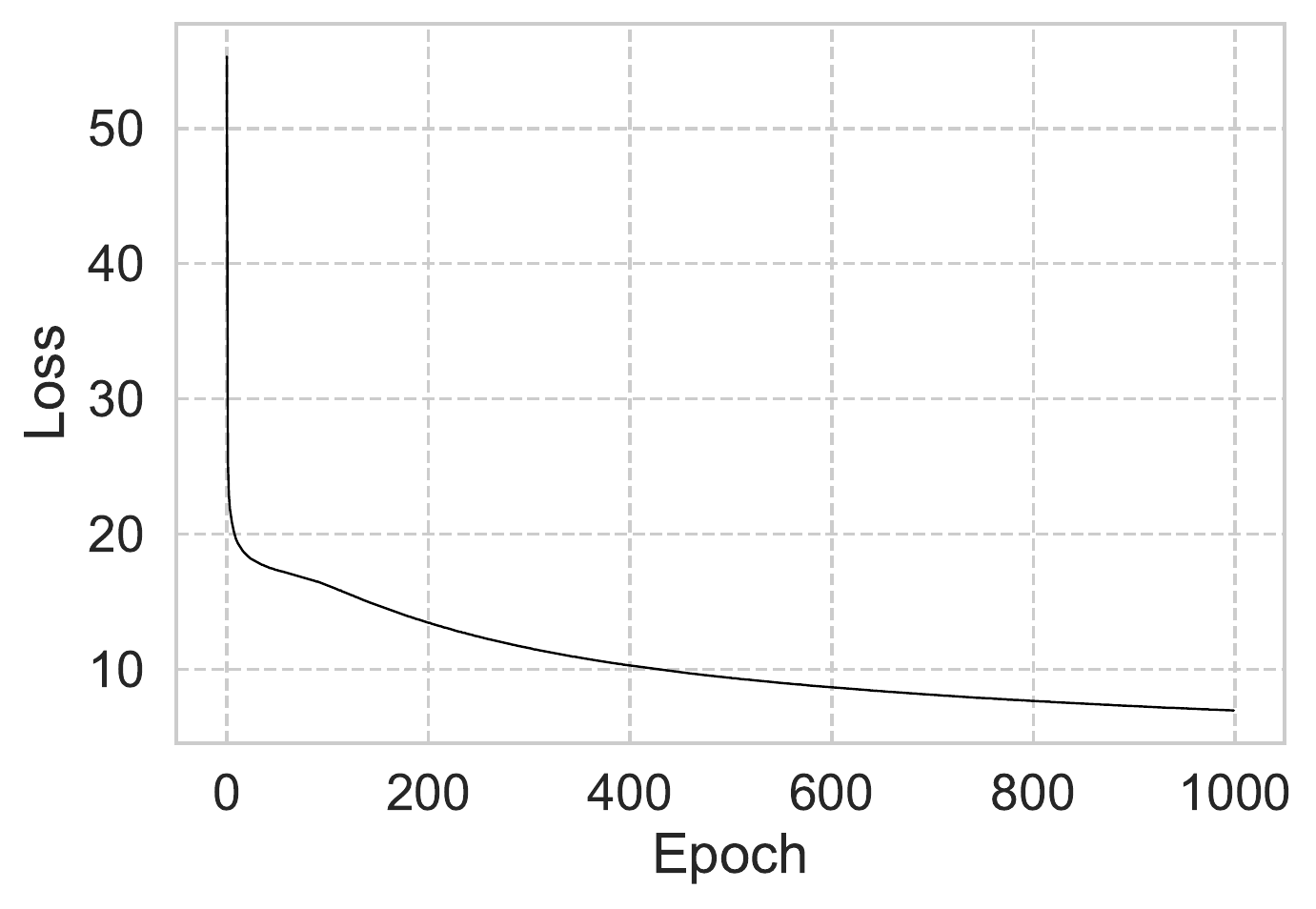}}
     \end{subfigure}
     \hfill
     \begin{subfigure}[FFHQ-256 $4 \otimes 8$]
         {\includegraphics[width=0.32\textwidth]{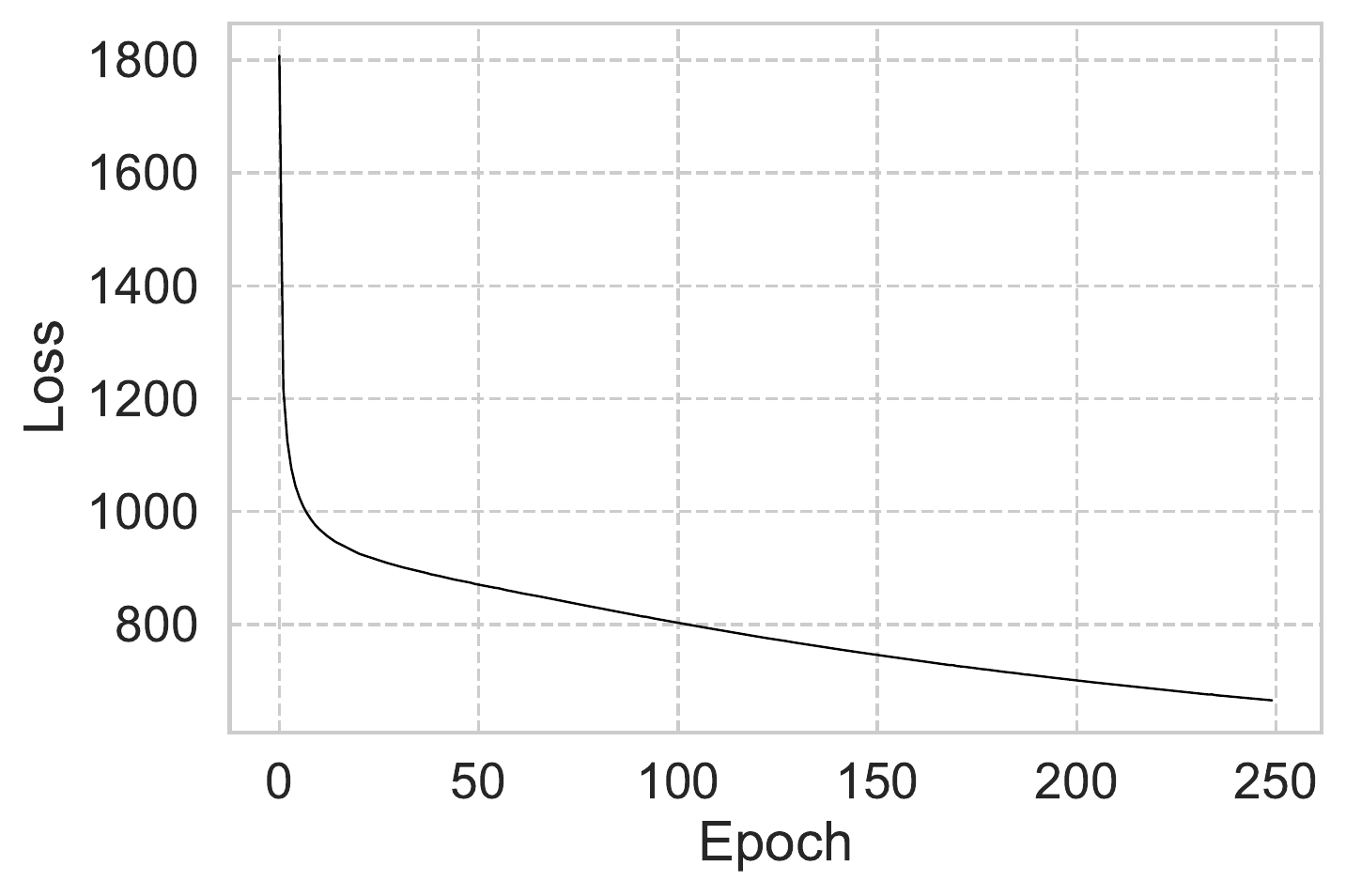}}
     \end{subfigure}
        \caption{MDAE training plots demonstrate that optimization is stable for all datasets. Loss values on the y-axis in these plots represent the value of our objective function (\autoref{eq:mems}).}
        \label{fig:loss-curves}
\end{figure}
\clearpage

\section{Langevin MCMC Laboratory} \label{sec:app:primer}
In this section, we give a light tutorial on Langevin MCMC used in the walk phase of WJS. Perhaps the most interesting result enabled by \emph{multimeasurement generative models} is that Langevin MCMC in high dimensions becomes stable enough for us to examine and dissect the performance of various Langevin MCMC variants in practice.  

The reader might ask why we used \emph{underdamped} Langevin MCMC versus its well-known overdamped version. The reason is the recent theoretical developments discussed in \autoref{sec:wjs} that show much faster \emph{dimension} dependence for the convergence (mixing time), $O(\sqrt{d})$ vs.\ $O(d)$, for underdamped Langevin MCMC. To give an idea regarding the dimension, for the $4\otimes 8$ model on FFHQ-256, $Md \approx 10^6$.  Our trained model for $4\otimes 8$ is used throughout this section. In addition, the MCMC is initialized with the fixed random seed throughout this section.

\subsection{Overdamped Langevin MCMC}
We start with overdamped Langevin MCMC which is simpler and give us some intuitions on the behavior of Langevin MCMC in general. The algorithm is based on the following stochastic iterations (setting inverse mass $u=1$):
\< \label{eq:app:overdamped} \y_{k+1} \leftarrow \underbrace{\y_{k} - \frac{\delta^2}{2} \nabla f(\y_k)}_{\encircle{$A_k$}} + \underbrace{\delta \cdot \varepsilon_k}_{\encircle{$B_k$}}, \text{ where } \varepsilon_k \sim N(0, I_{Md}) .\>
Comparing above with \autoref{eq:m-xhat-gaussian}, it is intuitive to consider setting $\delta=O(\sigma)$. We start with $\delta=\sigma$: in this case $A_k$ pulls noisy data to clean manifold (note the important extra factor of $1/2$ however) and $B_k$ term corresponds to adding multimeasurement noise sampled from $N(0,\sigma^2 I_{Md})$. The overall effect is that one starts \say{walking} on the manifold of the M-density. And if that is the case, the jumps computed by $\widehat{x}(\y_{k})$ should give samples close to the manifold of the (clean) data distribution\textemdash samples from $p(x)$. We start with that experiment by setting $\delta=\sigma$. Note that in walk-jump sampling we sample the M-density in $\mathbb{R}^{Md}$ generating highly noisy data for $\sigma=4$; we only show clean (jump) samples in $\mathbb{R}^d$. This is shown next by skipping every 30 steps, i.e. $\Delta k = 30$:

 {\includegraphics[width=\textwidth]{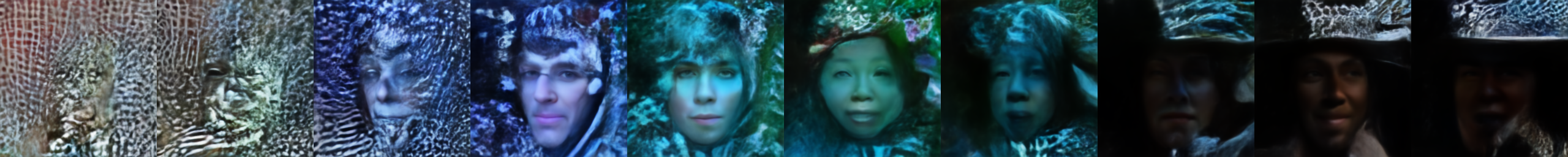}}

Our intuition was partly correct in that the sampler manages to \say{capture modes} but the step size is clearly too high. Now consider $\delta = \sigma/2$ and  $\Delta k=100$:

 {\includegraphics[width=\textwidth]{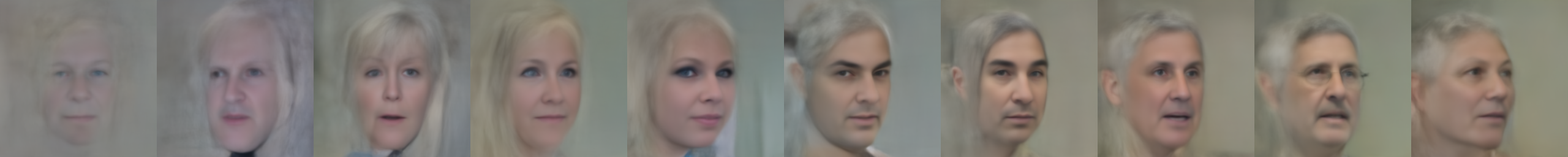}}
 
Continuing the chain, now shown for $\Delta k=1000$, we arrive at 

 {\includegraphics[width=\textwidth]{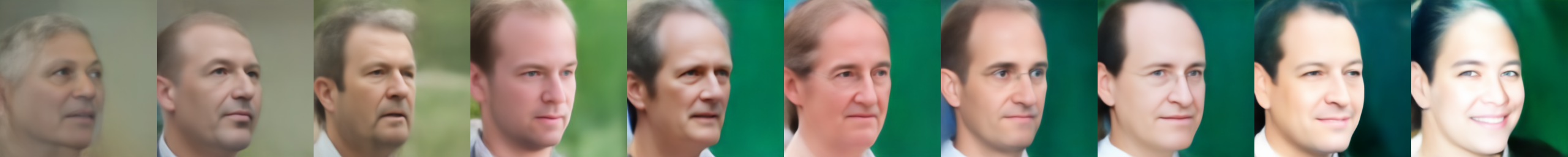}}

Theoretically, to converge to the true distribution (associated with the M-density) the step size should be small, where \say{small} is dictated by how \say{close} one wishes to be to the true distribution, e.g. see~\citep[Theorem 1]{cheng2018underdamped}. As we see above, even for $\delta=2$ the sampler does not converge after 1000 steps, but a fixed $\delta=\sigma/2$ appears to be too high for this model. By continuing with the chain with the step size $\delta=\sigma/2$ one starts to gradually move \emph{away} from the manifold as shown next:

 {\includegraphics[width=\textwidth]{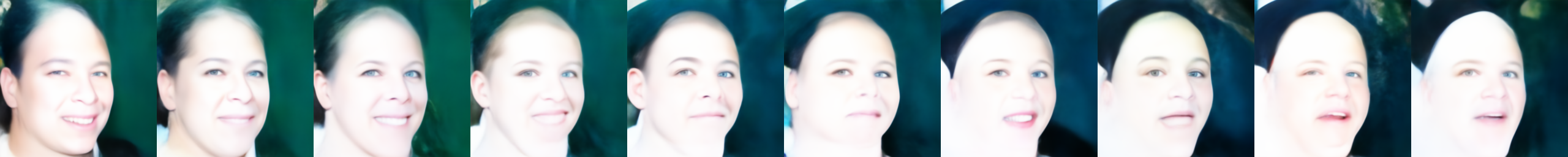}}

 \subsection{Underdamped Langevin MCMC}
In this section we focus on underdamped Langevin MCMC. The algorithms are based on discretizing underdamped Langevin diffusion (\autoref{eq:diffusion}). In this paper, we considered two such discretization schemes, by~\citet{sachs2017langevin} that we used in Algorithm~\ref{alg:wjsI}, and the one due to~\citet{cheng2018underdamped} that we implemented and used in Algorithm~\ref{alg:wjsII}. The first algorithm is not analyzed but it is very easy to show that in the limit $\gamma \rightarrow \infty$ it will be reduced to the overdamped Langevin MCMC we just discussed. Second algorithm is more complex in how friction $\gamma$ and inverse mass $u$ behave. \emph{For comparison, we present results obtained by both algorithms, showing Algorithm~\ref{alg:wjsI} first.}  As in the previous section, we start with $\delta=\sigma$; we set $\gamma=1$ ($u=1$ unless stated otherwise) with $\Delta k = 30$:

 {\includegraphics[width=\textwidth]{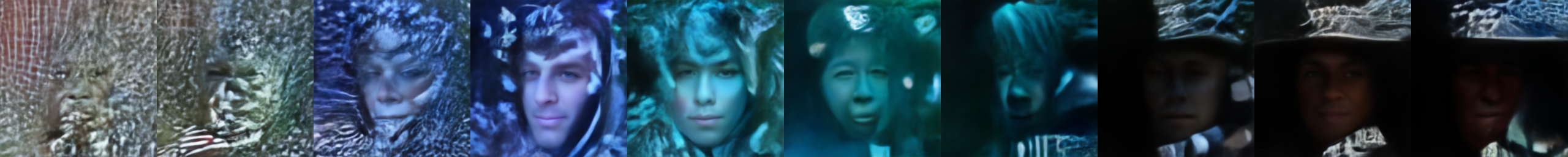}}
  {\includegraphics[width=\textwidth]{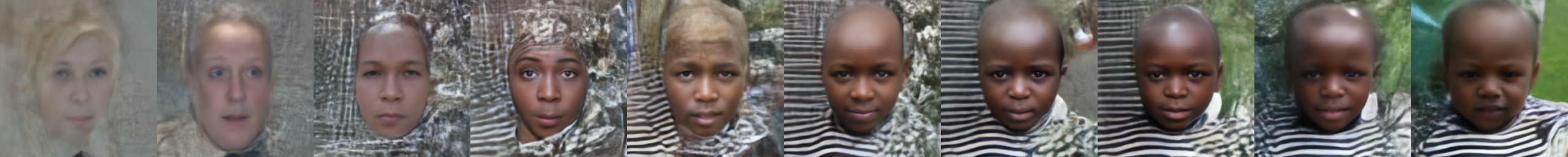}}
  
 As in the previous section $\delta=\sigma$ is too high for FFHQ-256, $4 \otimes 8$ model. (However, note the relative stability of Algorithm 2, for this random seed.) Next, we consider  $\delta=\sigma/2, \gamma=1$, shown with $\Delta k=100$:
 
  {\includegraphics[width=\textwidth]{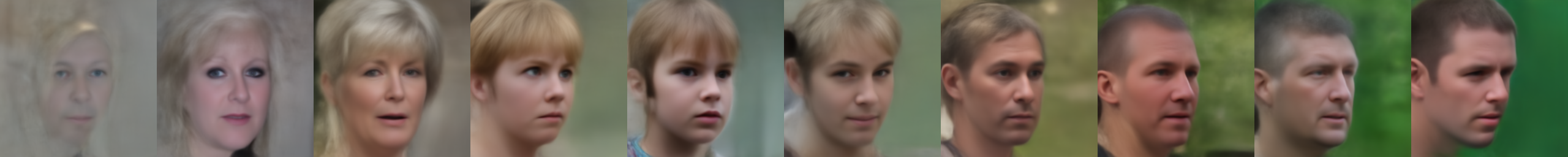}}
  {\includegraphics[width=\textwidth]{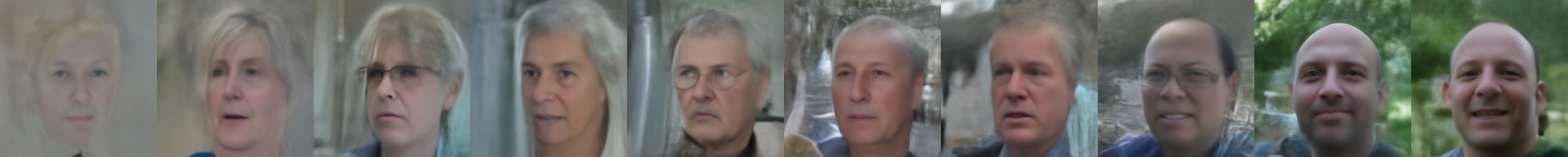}}
  
  Note the remarkable contrast between the results above and the corresponding figure (for $\Delta k = 100$) in the previous section. We continue the chain with the same $\Delta k = 100$:
  
    {\includegraphics[width=\textwidth]{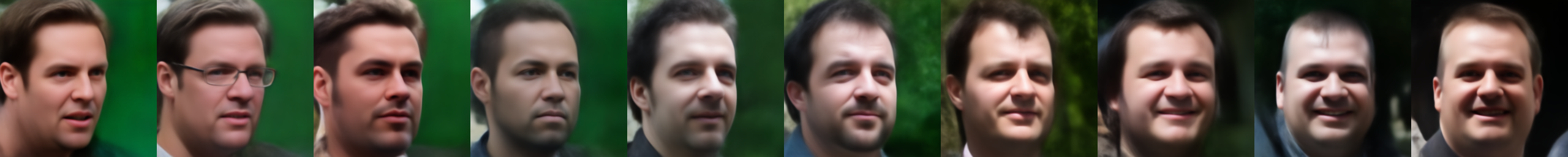}}
  {\includegraphics[width=\textwidth]{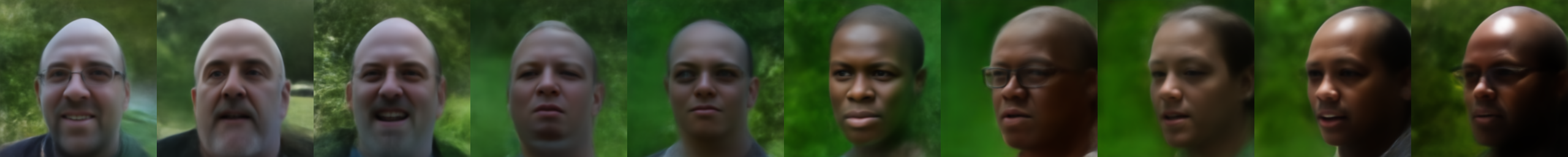}}
  
  Both algorithms start to diverge for this choice of parameters ($\delta=\sigma/2, \gamma=1$). To add more stability we can consider increasing the \say{mass} (lowering $u$). In~\citep{cheng2018underdamped}, $u$ is replaced with $1/L$, where $L$ is the Lipschitz constant of the score function. By lowering $u$, we are telling the algorithm that the score function is less smooth and MCMC uses more conservative updates. This can also be seen in the equations in both algorithms, where the step size $\delta$ is multiplied by $u$ in several places. However, note this effect is more subtle than just simply scaling $\delta$, \emph{especially so for Algorithm~\ref{alg:wjsII}} as can be seen by inspection. Next, we consider $u=\sfrac{1}{2}$. This will affect the mixing time and the chains are now shown with $\Delta k = 1000$ (instead of $\Delta k=100$ earlier):
  
      {\includegraphics[width=\textwidth]{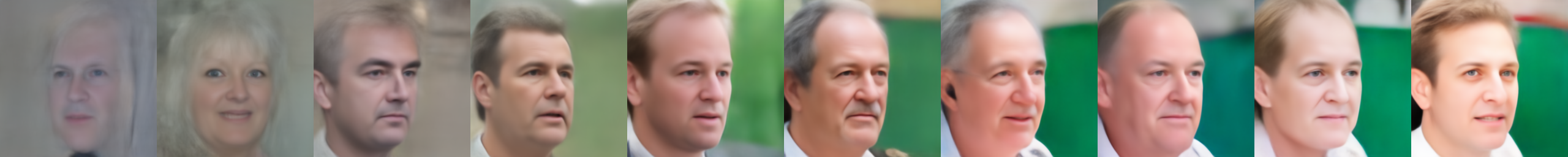}}
  {\includegraphics[width=\textwidth]{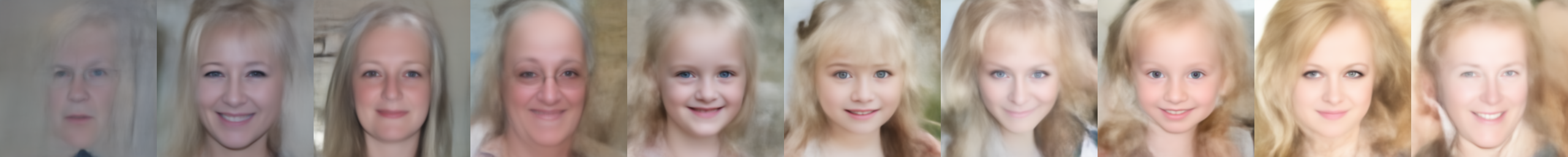}}
  
This experiment concludes this section. We come back to more FFHQ-256 chains in \autoref{sec:app:ffhq}.

\clearpage
\section{MNIST}
\label{sec:app:mnist}
\subsection{More is Different}\label{sec:app:more-is-different}
There is a notion of congruence which arise in our analysis in \autoref{sec:concentraion} which we denote by \say{$\cong$}: two different models, $\sigma \otimes M $ and $\sigma' \otimes M' $, \say{agree with each other} in terms of the plug-in estimator $$\sigma \otimes M 	\cong \sigma' \otimes M' \text{ if } \frac{\sigma}{\sqrt{M}} = \frac{\sigma'}{\sqrt{M'}}.  $$
However these models are vastly different, one is in $\mathbb{R}^{Md}$, the other in $\mathbb{R}^{M'd}$. And if $\sigma \gg \sigma'$ we expect that $\sigma \otimes M$ model to have better properties as a generative model. This is clear in the regime ($\sigma' \ll 1, M'=1)$. In that regime, almost no learning occurs: at test time MCMC either gets stuck in a mode for a long time (the best case scenario) or simply breaks down. We refer to~\citep{saremi2019neural} for a  geometric intuition on why large noise is essential in these models regarding the walk-jump sampling. We should point out that this is a general phenomenon in all denoising objectives~\citep{alain2014regularized, song2020improved}. In a nutshell, when noise is small the Langevin sampler finds it difficult to navigate in high dimensions and the forces of the Brownian motion eventually takes it to a part of space \emph{unknown} to our model and MCMC breaks down (the first experiment below). If we are \say{lucky} we can find the manifold initially (the second experiment below) and  MCMC may remain stable for a while but a model with $\sigma > \sigma'$ will have better mixing properties. We should highlight that in the analysis of~\citet{cheng2018underdamped} the mixing time scales as $\kappa^2$ where $\kappa$ is the \emph{condition number} (see Sec. 1.4.1) proportional to the Lipschitz constant; this quantifies the intuition that MCMC on smoother densities (smaller Lipschitz constant) mixes faster.

To validate these intuitions we consider comparing $1\otimes 16$ and $\sfrac{1}{4} \otimes 1$. In addition to statistically identical plug-in estimators, these models also arrive at similar training losses with the same training schedules (1.03 for $1\otimes 16$ and 1.19 for $\sfrac{1}{4} \otimes 1$). We start with $\sfrac{1}{4} \otimes 1$. Below we show WJS using Algorithm 2 with the setting $(\delta=\sfrac{1}{8},\gamma=\sfrac{1}{2},u=1)$ (the setting $\delta=\sigma/2, \gamma=\sfrac{1}{2}, u=1$ is a simple  choice we used in testing all our $\sigma \otimes M$ models). Here, MCMC \emph{breaks down} shortly after the last step shown here (this chain is shown with $\Delta k=30$, in the remainder $\Delta k=500$):
 \begin{figure}[h!] 
\begin{center}
 {\includegraphics[width=\textwidth]{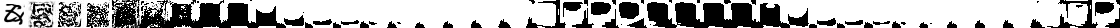}}
 \end{center}
 \end{figure}
 
  \vspace{-0.5cm}
 
  Algorithm 1 is more stable here and we arrive at the following chain (total of $10^5$ steps):
  \begin{figure}[h!] 
\begin{center}
 {\includegraphics[width=\textwidth]{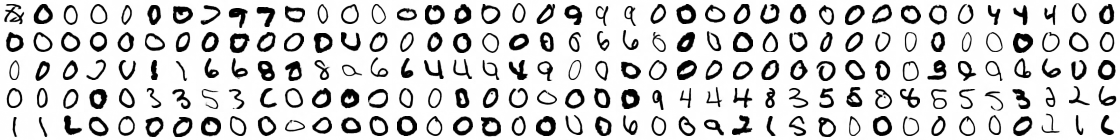}}
 \end{center}
 \end{figure}
 \vspace{-0.5cm}
 
 Now, for $1 \otimes 16$ model, using Algorithm 2 with parameters $(\delta=\sfrac{1}{2}, \gamma=\sfrac{1}{2}, u=1)$, we arrive at:
 \begin{figure}[h!] 
\begin{center}
 {\includegraphics[width=\textwidth]{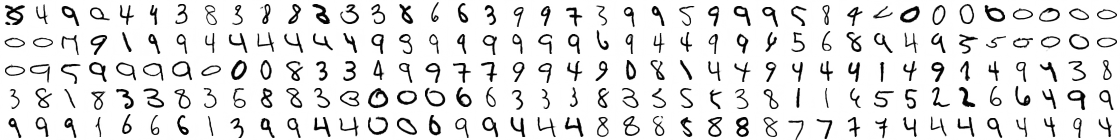}}
 \end{center}
 \end{figure}
 \vspace{-0.5cm}

And using Algorithm 1 with the same parameters (and initial seed) we arrive at:
 \begin{figure}[h!] 
\begin{center}
 {\includegraphics[width=\textwidth]{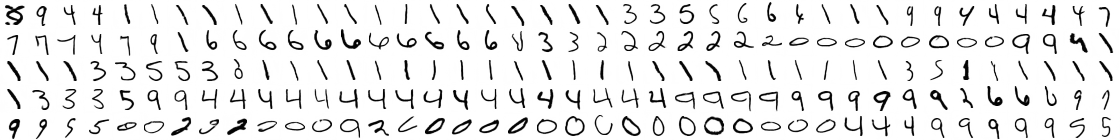}}
 \end{center}
 \end{figure}
 
  \vspace{-0.5cm}
(Note the stark differences in terms of mixing between Algorithms 1 and 2 for this model.)

In summary, \emph{more is different}: \emph{quantitative differences become qualitative ones}~\citep{anderson1972more} and this is highlighted here for \say{congruent} $\sigma \otimes M$ models that differ in terms of the noise level. If computation is not an issue, one should always consider models with larger $\sigma$  but this remains a conjecture here in its limiting case (as $\sigma$ and $M$ grow unbounded, while  $\sigma/\sqrt{M}$ remains \say{small}).

\clearpage

\subsection{Increasing $M$: time complexity, sample quality, and mixing time trade-offs} \label{sec:app:varyM}
Here we extend the ablation study in the previous section by studying $\sigma \otimes M$ models with a fixed $\sigma$ where we increase $M$. In the previous section we studied two models where $\widehat{x}(\y)$ has the same statistical properties as measured by the loss (and the plug-in estimator), and we validated the hypothesis that MCMC sampling has better mixing properties for the smoother M-density (with larger $\sigma$). Now, the question is what if we keep $\sigma$ fixed and increase $M$?

As we motivated in \autoref{sec:intro}, by increasing $M$ the posterior $p(x|\y)$ concentrates on the mean $\widehat{x}(\y)$, thus increasing the sample quality in WJS. However there are trade-offs: (i) The first is the issue of \emph{time complexity} which is an open problem here. In our architecture, the $M$ measurements are passed through a convolutional layer and after that models with different $M$ have approximately the same time complexity. The problem is this may not be an ideal architecture. Presumably, one should increase the network capacity when increasing $M$ (for the same $\sigma$) but by how much we do not know (this clearly depends on how complex $p_X$ is). Here we ignore this issue and we assume that our architecture has enough capacity for the largest $M=16$ considered. (ii) The second trade-off is as we increase $M$, the price we pay for higher sample quality is that we will have longer mixing times, both in terms of generating the first sample, and more importantly in the mixing between modes. This trade-off is intuitive since $p(y_1,\dots,y_M)$ becomes more complex for larger $M$ (for fixed $\sigma$). Below we compare WJS chains (as before, without warmup) for $\sigma=1$ and $M=1,\,2,\,4,\,16$:  

 \begin{figure}[h!] 
\begin{center}
\begin{subfigure}
 {\includegraphics[width=\textwidth]{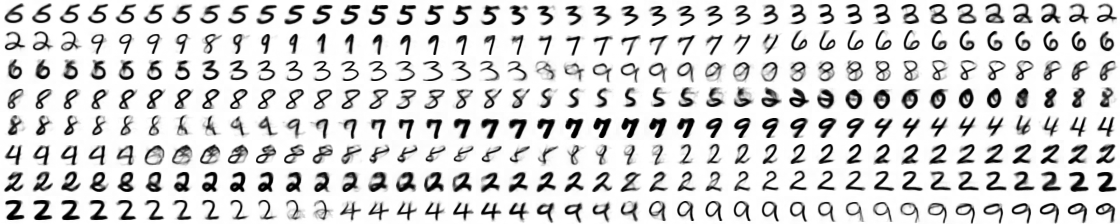}}
\end{subfigure}
\begin{subfigure}
 {\includegraphics[width=\textwidth]{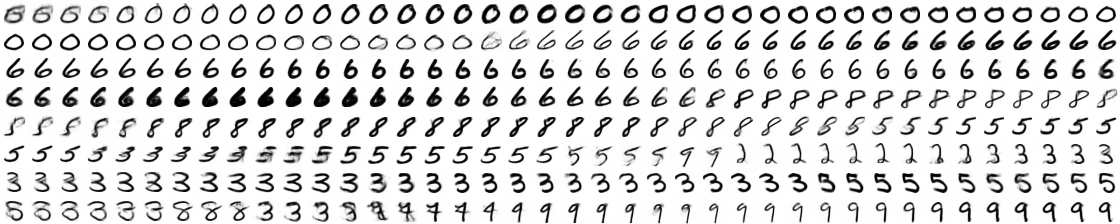}}
\end{subfigure}
\begin{subfigure}
 {\includegraphics[width=\textwidth]{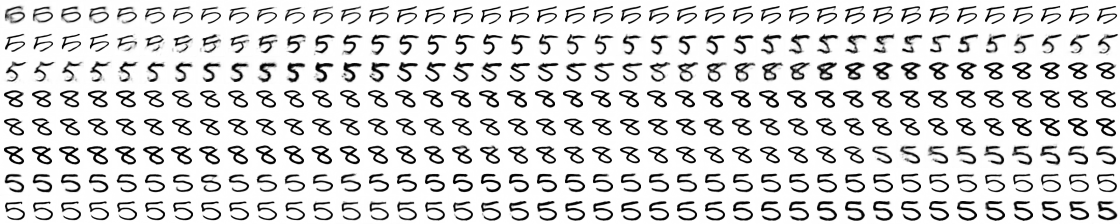}}
\end{subfigure}
\begin{subfigure}
 {\includegraphics[width=\textwidth]{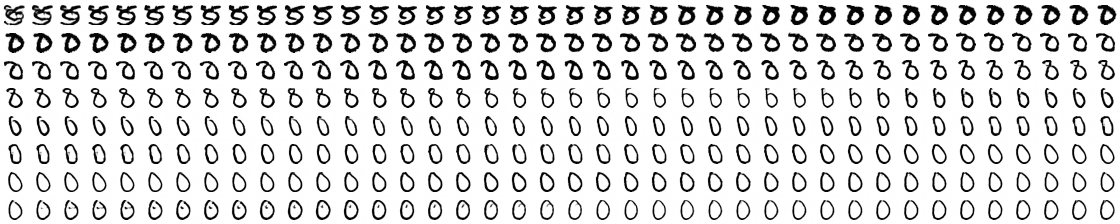}}
\end{subfigure}
\end{center}
\caption{(\emph{mixing time vs. image quality trade-off}) WJS chains for $1 \otimes M$ models are presented in ``real time'' ($\Delta k =1$) starting from noise (320 steps in total), for $M=1,\,2,\,4,\,16$ in order from top pannel to the bottom one. We used Algorithm~\ref{alg:wjsII} $(\delta=\sfrac{1}{2}, \gamma=\sfrac{1}{2}, u=1)$ with the same initial seed. Note that, at the cost of sample quality, all classes are visited in $1 \otimes 1$ in the short chain presented.  } 
\label{fig:app:varyM} 
 \end{figure}

\clearpage
\subsection{MUVB vs. MDAE}\label{sec:app:muvb-mdae-mnist}
Below we make a one to one comparison between MUVB and MDAE for the $1 \otimes 4$ setting we trained on MNIST. In both cases we ran a single chain for \textbf{1 million+} steps. For sampling M-density we used the Langevin MCMC algorithm by~\citet{sachs2017langevin} with parameters $(\delta=1,\gamma=\sfrac{1}{4}, u =1)$ (see \autoref{sec:app:wjs}). \emph{This is an \say{aggressive} choice of parameters for Langevin MCMC, designed for fast mixing, yet the chains do not break up to 1M+ steps (we stopped them due to computational reasons).} 

Below we show the two chains at discrete time resolution of 5 steps (\emph{the first sample, top left corner, is obtained after only 5 steps}). This comparison visually demonstrates that MUVB has better mixing and MDAE better sample quality. All digit classes are visited in MUVB in a variety of styles in 4000 steps shown here. MDAE chain is cleaner but the classes $\{0,1,5,6\}$ are visited scarcely.

 \begin{figure}[h!]
\begin{center}
 {\includegraphics[width=\textwidth]{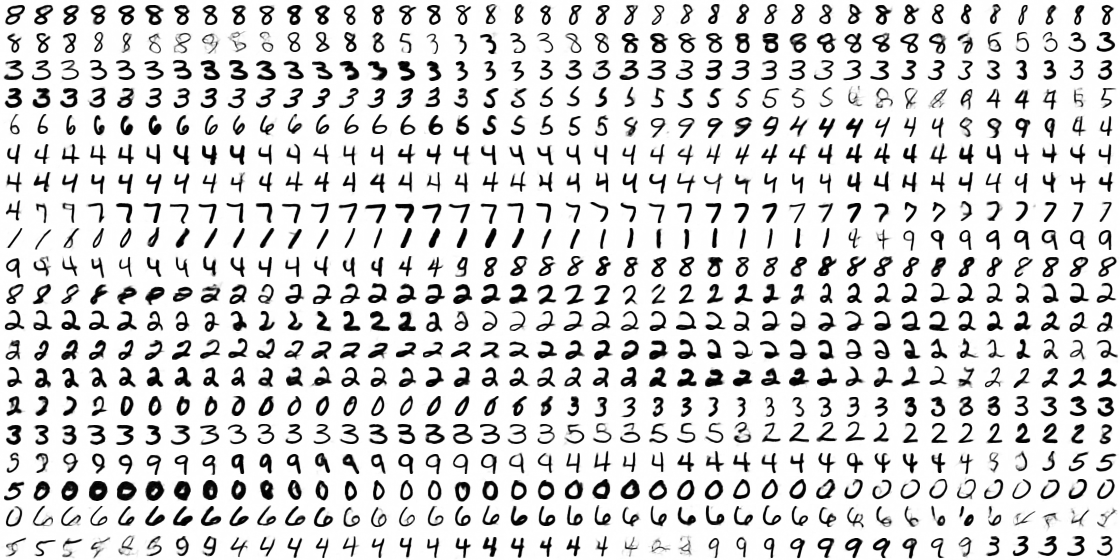}}
\caption{MUVB, $1\otimes4$ model on MNIST, $\Delta k=5$ steps.} 
 \end{center}
 \end{figure}
 
  \begin{figure}[h!] 
\begin{center}
 {\includegraphics[width=\textwidth]{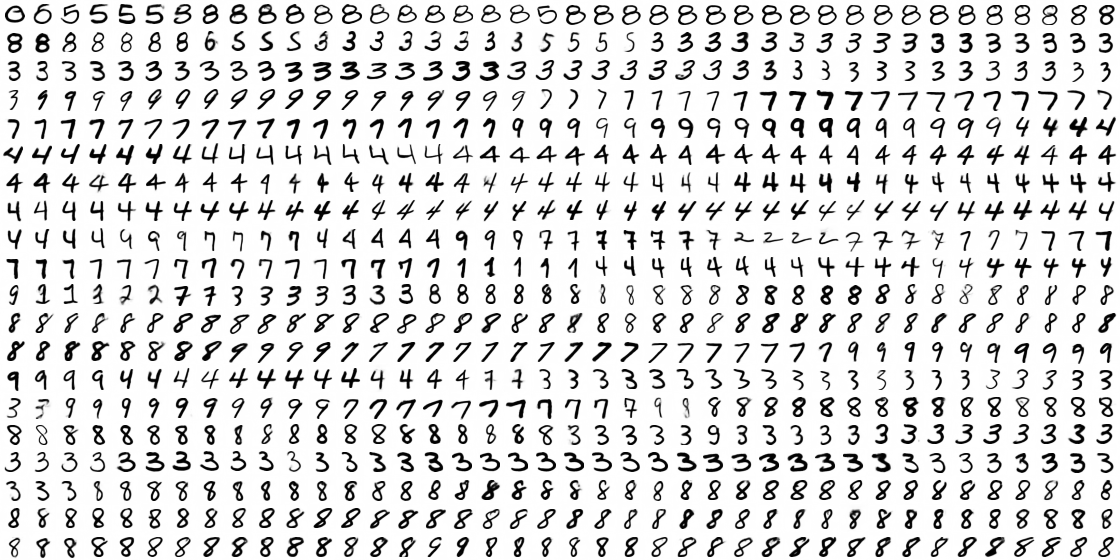}}
\caption{MDAE, $1\otimes4$ model on MNIST, $\Delta k=5$ steps.} 
 \end{center}
 \end{figure}

\clearpage
 
 \subsection{Lifelong Markov Chain} \label{sec:app:lifelong}

We demonstrate the WJS chain with 1 million+ steps  obtained in MDAE ($1\otimes4$) in its entirety (viewed left to right, top to bottom). We informally refer to these long chains that never break as \emph{lifelong Markov chains}. The chain is reported here in its \say{entirety} to demonstrate the fact that it does not get stuck in a mode and remains stable through the end (we stopped it due to computational reasons).

 \begin{figure}[h!] 
\begin{center}
 {\includegraphics[width=\textwidth]{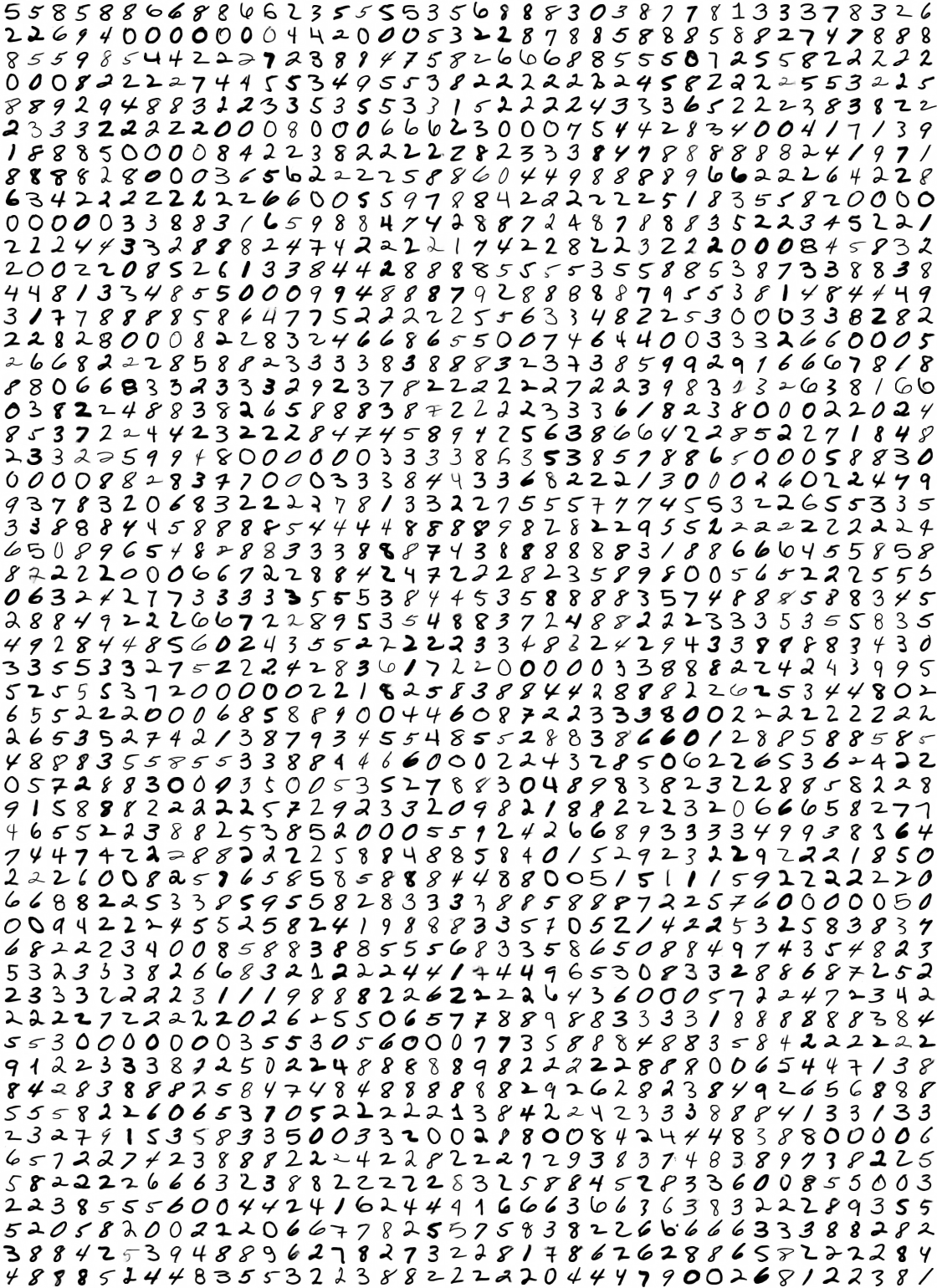}}
\caption{MDAE, $1\otimes4$ model on MNIST, $\Delta k=500$ steps, 1 million+ steps in total.} 
 \end{center}
 \end{figure}

\clearpage
\section{CIFAR-10: single-chain FID evaluation \& some failure modes}
\label{sec:app:cifar}

The main \say{fear} in parametrizing the score function directly in MSM, which MDAE is an instance of, is we are not guaranteed to learn a conservative score function (see \autoref{sec:mdae}). More importantly, we do not have a control over how the learned model is failing in that regard. These fears were realized in our CIFAR experiments on the $1 \otimes 8$ model presented below. The main results use Algorithm~\ref{alg:wjsII}. Algorithm 1 simply fails for the Langevin MCMC parameters we experimented with below (for completeness we present a short chain at the bottom panel). In 2D toy experiments the non-conservative score functions were quantified in ~\citep[Fig. 1h]{saremi2018deep}, but this quantification is difficult in high dimensions. However, what is intriguing about the experiments here is that the  \say{non-conservative nature} of the score function is manifested\textemdash {this is merely a conjecture}\textemdash in the \emph{cyclic} mode visits apparent in the Markov chain.

 \begin{figure}[h!] 
\begin{center}
\begin{subfigure}
 {\includegraphics[width=\textwidth]{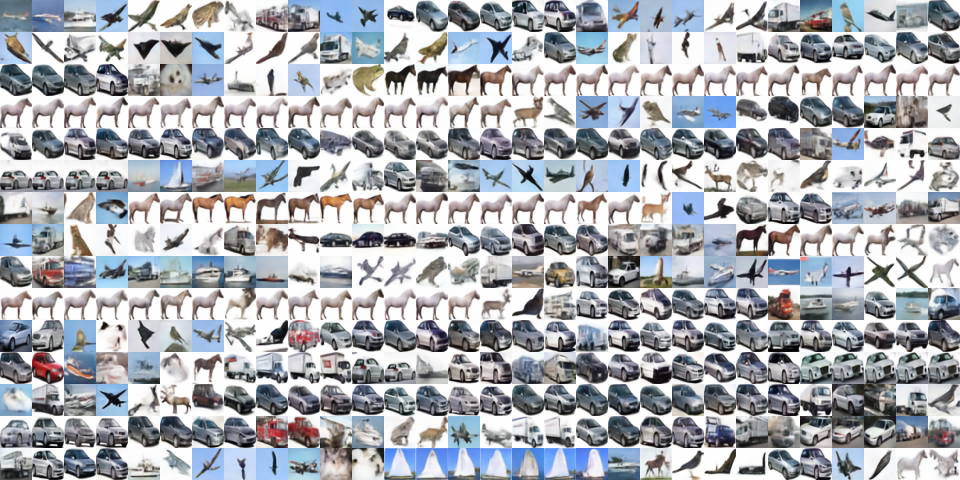}}
\end{subfigure}
\begin{subfigure}
 {\includegraphics[width=\textwidth]{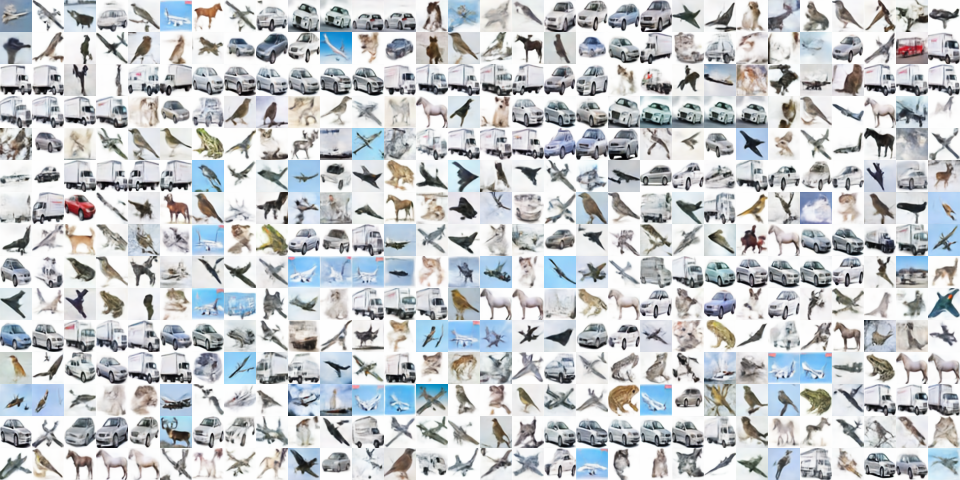}}
\end{subfigure}
\begin{subfigure}
 {\includegraphics[width=\textwidth]{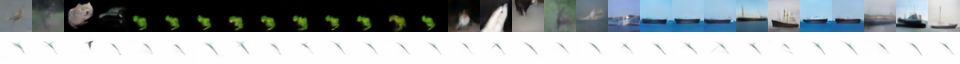}}
\end{subfigure}
\end{center}
\caption{Top panels use Algorithm 2 at two different checkpoints. The bottom panel uses Algorithm 1. MCMC parameters are $(\delta=0.1,\gamma=2,u=10)$ for the top and bottom panel, and $u=20$ for the middle panel. In all cases $\Delta k=500$ ($\approx 2\times10^5$ steps). The FID scores are respectively 91 \& 79.} 
\label{fig:app:failure}
 \end{figure}

\clearpage

We encountered the same problem using \emph{spectral normalization}~\citep{miyato2018spectral}:  the chain below was obtained using Algorithm~\ref{alg:wjsII} with the setting $(\delta=\sfrac{1}{2}, \gamma=\sfrac{1}{2}, u=1)$ where we got the FID score of 99. We ran the chain for 1\,M steps; the first 600\,K steps are visualized below:
 \begin{figure}[h!] 
\begin{center}
 {\includegraphics[width=0.97\textwidth]{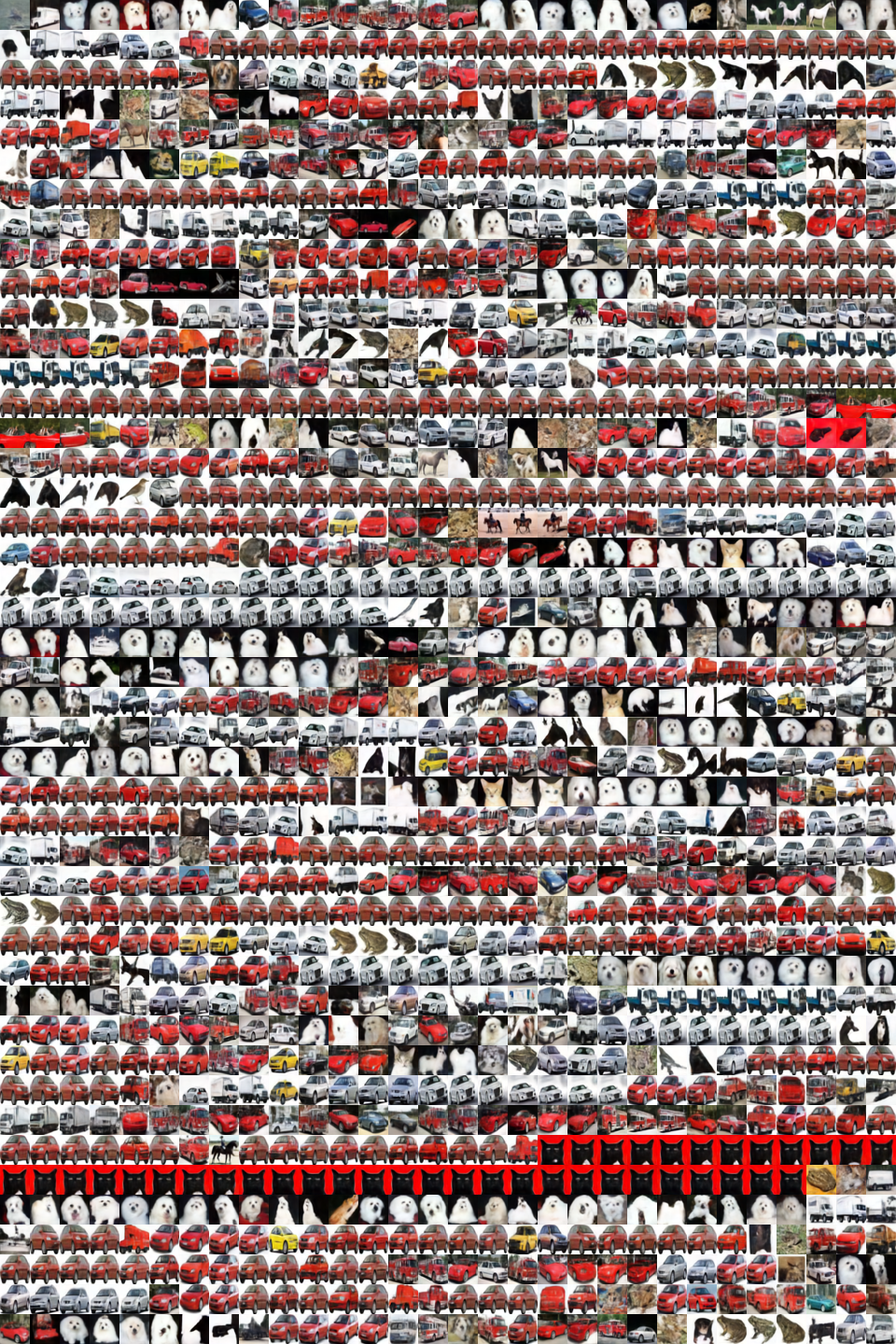}}
\caption{WJS chain for MDAE $1 \otimes 8$, trained with spectral normalization, $\Delta k = 500$. } 
\label{fig:app:cifar-2}
 \end{center}
 \end{figure}
 
\clearpage

We also experimented with training MDAE using SGD optimizer. Using Algorithm~\ref{alg:wjsI} with the setting $(\delta=0.7, \gamma=1, u=1)$ the MCMC chain was stable for 1\,M steps where we stopped the chain. We picked 50\,K images at equal interval of $\Delta k=20$ from the chain resulting in an FID of 43.95:
 \begin{figure}[h!] 
\begin{center}
 {\includegraphics[width=0.97\textwidth]{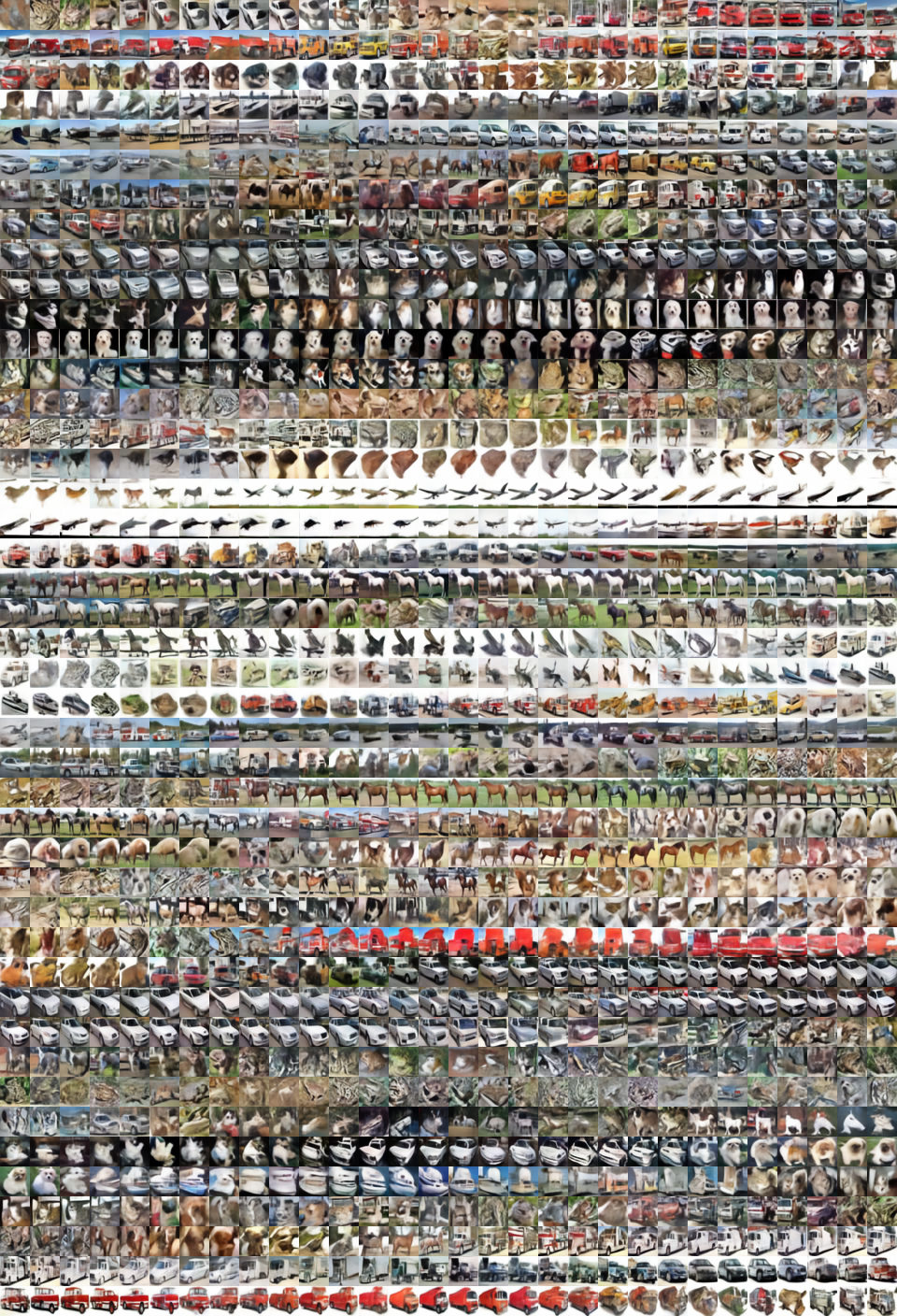}}
\caption{The first 13\,K steps of a WJS chain with 1\,M steps for MDAE $1 \otimes 8$ on CIFAR-10 with $\Delta k = 10$. We obtained the FID score of 43.95 by selecting 50\,K samples skipping $\Delta k=20$ steps. } 
\label{fig:app:cifar-3}
 \end{center}
 \end{figure}
 
We finish this section with a discussion on numerical comparisons on the sample quality between our MDAE $1 \otimes 8$ model and other recent MCMC-based methods, summarized in Table~\ref{app:tab:fid}:


\begin{itemize}
	\item The table excludes denoising diffusion models since the MCMC chain in such models are \say{conditional} in the sense that sampling is via reversing a diffusion process  based on a \emph{sequence of conditional distributions} which is learned during training (see the discussion in \autoref{sec:related}).  By comparison, in all the papers in the table below there is only one energy/score function that is being learned.
	\item The use of the term \say{long chain} below is based on the current status of the field, chains of order 1,000 steps: the MCMC chains used for generation and reporting FID scores in most prior works has been \say{short}, of order 50 to 100 steps by comparison which we have indicated in a column in the table. The FID scores in those papers were obtained by 50\,K \emph{short-run parallel chains}.
	\item Our work is unique in this literature in that for the first time we report competitive FID scores obtained from a \emph{single MCMC chain}. As we emphasized in \autoref{sec:experiments} the quest for generating very long \say{life long} chains with good sample quality and mixing properties, as measured by the FID score~\citep{heusel2017gans}, is a scientific challenge and we believe it is an ultimate test to demonstrate that one has found a good approximation to the true energy/score function. We should point out that the only competitive paper here in obtaining FID scores with long MCMC chains is by~\cite{nijkamp2022mcmc} where the \say{long chains} have been limited to 2,000 steps and the FID obtained is much worse (78.12) than what we obtained (43.95) with our $1\otimes8$ model from a single MCMC chain of 1,000,000 steps (see \autoref{fig:app:cifar-3}).
\end{itemize}

\begin{table}[h!]
\caption{FID results for unconditional MCMC-based sample generation on CIFAR-10}\label{app:tab:fid}
\begin{center}
\small
\begin{tabular}{@{}l l c c c @{}}
\toprule
  &  FID  & long chain  & single chain  & MCMC steps\\ \midrule
\cite{xie2018cooperative}  &  35.25 & \xmark & \xmark & N/A \\  
\cite{nijkamp2019learning} & 23.02 & \xmark & \xmark & 100\\
\cite{du2019implicit}  &  40.58     & \xmark      & \xmark &  60  \\ 
\cite{zhao2020learning} & \textbf{16.71} & \xmark & \xmark & 60 \\
\cite{xie2021learning} & 36.20 & \xmark & \xmark & 50  \\
\cite{nijkamp2022mcmc} &  78.12 & \cmark & \xmark & 2,000 \\
MDAE, $1\otimes 8$ (our work) 	& 43.95	& \cmark & \cmark &  \textbf{1,000,000} \\ \bottomrule
\end{tabular}
\end{center}
\end{table}

\section{FFHQ-256} \label{sec:app:ffhq}

In this section, we provide several WJS chains for our  MDAE $4\otimes 8$ model on FFHQ-256 dataset. We refer to Algorithm~\ref{alg:wjsI} by \textbf{A1} and Algorithm~\ref{alg:wjsII} by \textbf{A2}, and MCMC parameters are listed as $(\delta,\gamma,u)$.

\vfill

\centering A1; $(2,1,1)$; $\Delta k=300$
 {\includegraphics[width=\textwidth]{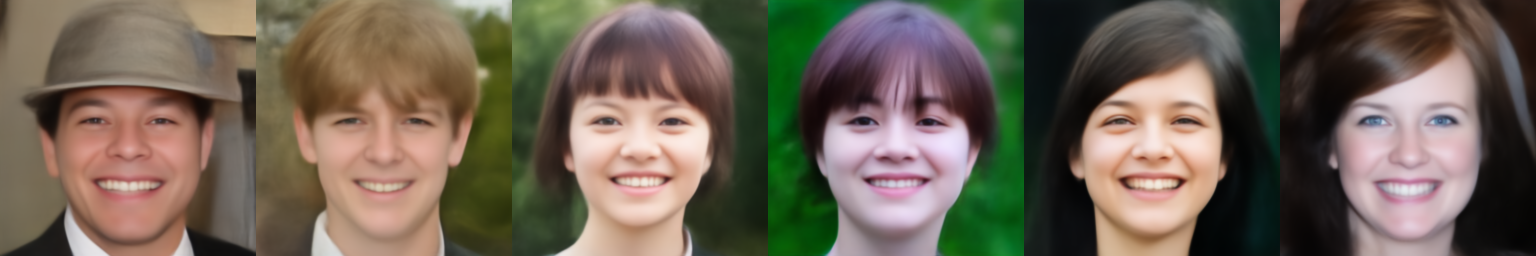}}

 \centering A2; $(2,1,1)$; $\Delta k=200$
 {\includegraphics[width=\textwidth]{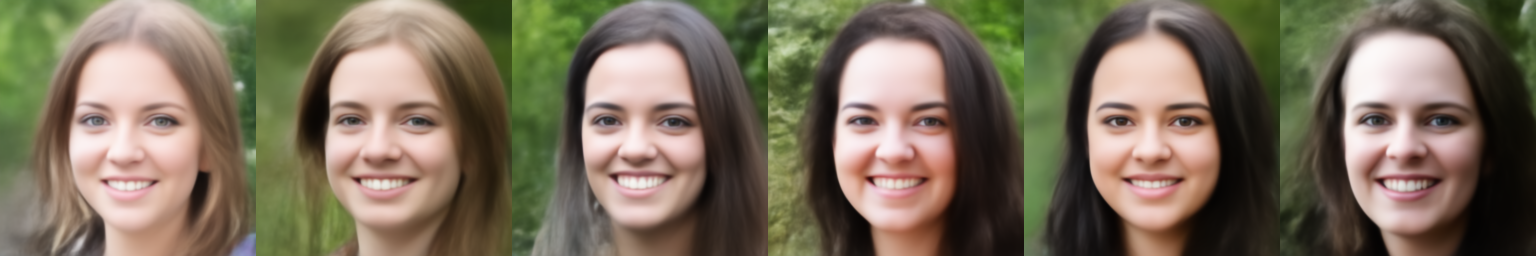}} 

    \centering A2; $(\sfrac{1}{10},1,10)$; $\Delta k=500$
 {\includegraphics[width=\textwidth]{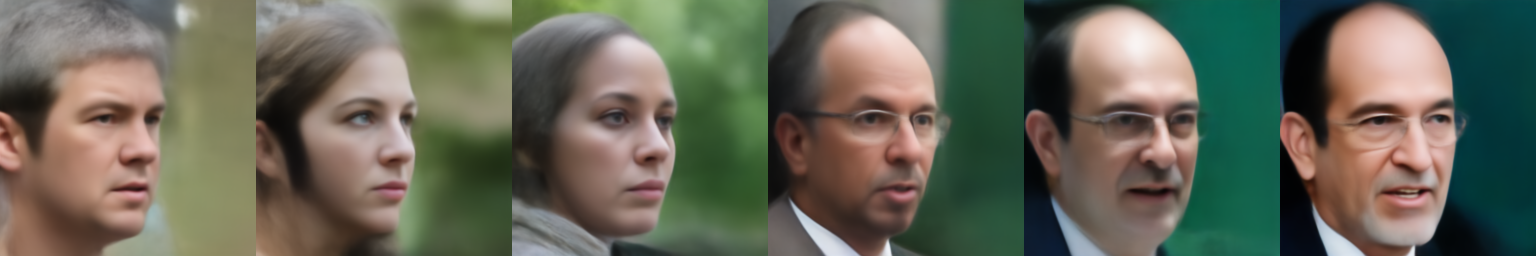}}

    \centering A2; $(4,\sfrac{1}{2},\sfrac{1}{4})$; $\Delta k=150$
 {\includegraphics[width=\textwidth]{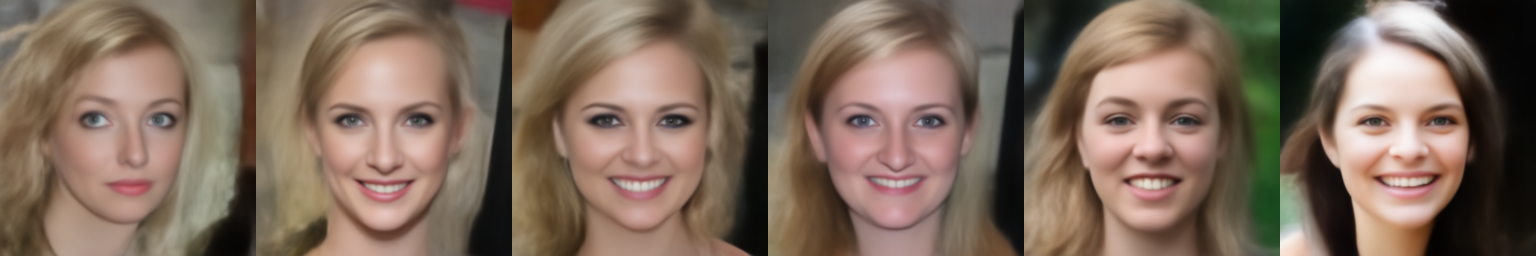}}

 \centering A1; $(2,\sfrac{1}{2},1)$; $\Delta k=100$
 {\includegraphics[width=\textwidth]{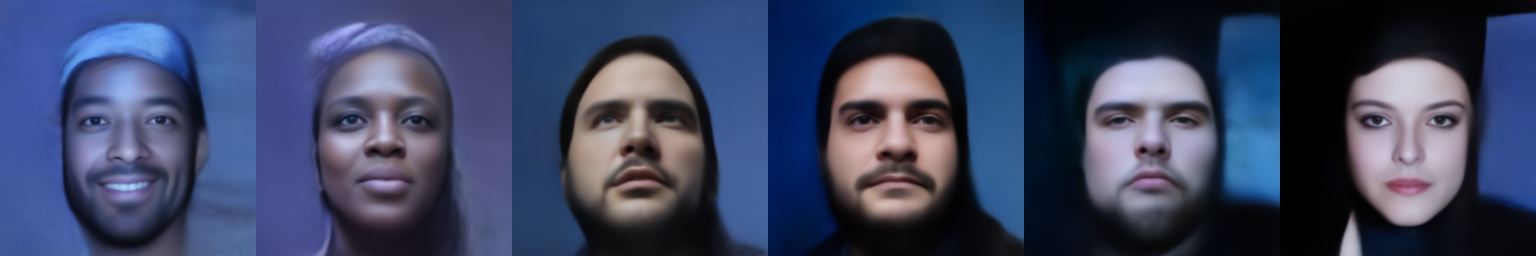}}
 
  \centering A1; $(2,\sfrac{1}{2},1)$; $\Delta k=20$
 {\includegraphics[width=\textwidth]{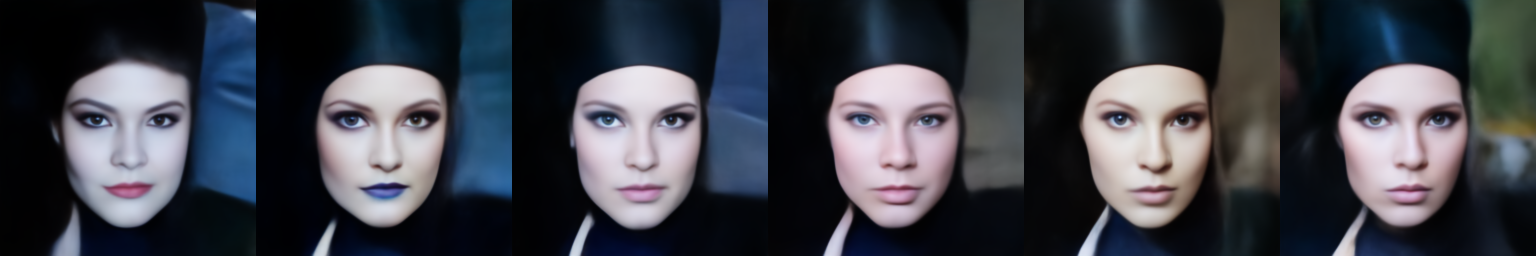}}

\end{document}